\newcommand{\cmark}{\textcolor{OliveGreen}{\ding{51}}}%
\newcommand{\xmark}{\textcolor{red}{\ding{55}}}%
\definecolor{Gray}{gray}{0.85}
\newcommand{\revision}[1]{\textcolor{black}{#1}}
\newtheorem{theorem}{Theorem}[section]
\newtheorem{lemma}[theorem]{Lemma}
\newtheorem{corollary}[theorem]{Corollary}
\title{REx: Data-Free Residual Quantization Error Expansion}
\author{%
  Edouard Yvinec$^{1,2}$ , Arnaud Dapogny$^2$ , Matthieu Cord$^1$ , Kevin Bailly$^{1,2}$\\
    Sorbonne Université$^1$, CNRS, ISIR, f-75005, 4 Place Jussieu 75005 Paris, France \\
  Datakalab$^2$, 114 boulevard Malesherbes, 75017 Paris, France \\
  \texttt{ey@datakalab.com} \\
}
\begin{document}

\maketitle

\begin{abstract}
Deep neural networks (DNNs) are ubiquitous in computer vision and natural language processing, but suffer from high inference cost. This problem can be addressed by quantization, which consists in converting floating point operations into a lower bit-width format. With the growing concerns on privacy rights, we focus our efforts on data-free methods. However, such techniques suffer from their lack of adaptability to the target devices, as a hardware typically only supports specific bit widths. Thus, to adapt to a variety of devices, a quantization method shall be flexible enough to find good accuracy \textit{v.s.} speed trade-offs for every bit width and target device. To achieve this, we propose REx, a quantization method that leverages residual error expansion, along with group sparsity. 
We show experimentally that REx enables better trade-offs (in terms of accuracy given any target bit-width) on both convnets and transformers for computer vision, as well as NLP models. In particular, when applied to large language models, we show that REx elegantly solves the outlier problem that hinders state-of-the-art quantization methods.
In addition, REx is backed off by strong theoretical guarantees on the preservation of the predictive function of the original model. Lastly, we show that REx is agnostic to the quantization operator and can be used in combination with previous quantization work.

\end{abstract}

\section{Introduction}
Deep neural networks (DNNs) achieve outstanding performance on several challenging computer vision tasks such as image classification \citep{he2016deep}, object detection \citep{liu2016ssd} and semantic segmentation \citep{chen2017rethinking} as well as natural language processing benchmarks such as text classification \citep{devlin2018bert}. However, their accuracy comes at a high computational inference cost which limits their deployment, moreso on edge devices when real-time treatment as well as energy consumption are a concern. This problem can be tackled \textit{via} DNN quantization, \textit{i.e.} by reducing the bit-width representation of the computations from floating point operations (FP) to e.g. int8 (8-bits integer representation), int4, int3 or even lower-bit representation such as ternary (where weights values are either $-1$, $0$ or $+1$) quantization. Because DNN inference principally relies on matrix multiplication, such quantization dramatically diminishes the number of bit-wise operations (as defined by \cite{krishnamoorthi2018quantizing}), thus limiting the DNN latency and energy consumption. However, DNN quantization usually comes at the expense of the network accuracy. As a consequence, DNN quantization is an active field of research \citep{courbariaux2016binarized,wu2018training,jacob2018quantization,achterhold2018variational,louizos2018relaxed,sheng2018quantization,choi2022s,zhong2022intraq} that aims at limiting this accuracy drop while reducing the number of bit-wise operations.

All the aforementioned methods are data-driven as they either involve training a network from scratch or fine-tune an already trained and quantized one. However, while such approaches usually allow lower quantization errors using low bit-wise representations, due to the growing concerns on privacy rights and data privacy, there is an ever increasing number of real-case scenarios (e.g. health and military services) where data may not be available for quantization purpose. \revision{Furthermore, the bloom of large langage models (LLMs) that are very expensive to train further motivates the use of \textit{post-hoc} data-free quantization methods.}
Motivated by these observations, several data-free quantization algorithms were published in the recent years \citep{nagel2019data,meller2019same,zhao2019improving,cai2020zeroq,zhang2021diversifying,squant2022}, which focus on the quantization operator, \textit{i.e.} the transformation which maps the floating point weights to their low-bit, fixed point, values. However, these approaches still struggle to offer an interesting alternative to data-driven techniques in terms of accuracy. 

Furthermore, when considering a specific target device for deployment, traditional quantization methods, usually focusing on the quantization operator, offer limited options: given a supported bit width (given by the device, as most hardware usually support only a few representation formats \citep{nivdiaA100}) they either achieve satisfactory accuracy or not. To address this concern, we wish to design a flexible quantization method, \textit{i.e.} one that can provide several accuracy \textit{vs.} speed trade-off points for each bit width. Drawing inspiration from wavelets-based methods for image compression \citep{rabbani2002jpeg2000,mallat2009theory}, we tackle this limitation by considering the successive residual quantization errors between the quantized and original model. Increasing the number of residuals in the expansion (\textit{i.e.} the expansion order) increases the fidelity to the original, non-quantized model at the expense of additional computations. In addition, we propose a group-sparse expansion which allows us to maintain the accuracy using significantly less bit operations. Hence, given a target device, our approach allows to find the best accuracy \textit{vs.} speed trade-off. 
Our contributions are thus four-fold:

\begin{itemize}
    \item \textbf{REx, a data-free quantization method that is both efficient and flexible.} REx leverages residual quantization, along with group-sparsity to enable finding suitable trade-offs depending on a target bit-width.
    \item \textbf{Theoretical guarantees} on both the exponential convergence of the quantized model towards the full-precision model and the maximum error with respect to the predictive function. This is of paramount importance in a data-free context, where we cannot easily measure the accuracy degradation.
    \item \textbf{Extensive empirical validation} we show through a thorough empirical validation that, as a standalone method, REx significantly outperforms every state-of-the-art data-free quantization technique, allowing to find better trade-offs on a variety of benchmarks involving ConvNet for classification, object detection or semantic segmentation as well as transformers on GLUE text classification.
    \item \textbf{A ready-to-use solution} that uses a single binary residual to handle outliers within the weight distributions, which is a well-known pitfall when attempting to quantize LLMs.
\end{itemize}

\section{Related Work}
\subsection{Quantization}
\revision{In this section, we review existing methods for DNN quantization, with an emphasis on approaches geared towards run-time acceleration.
The vast majority of DNN quantization techniques rely on data usage (Quantization Aware Training). Furthermore, methods such as \citep{wu2018training,jacob2018quantization,achterhold2018variational,louizos2018relaxed,sheng2018quantization,ullrich2017soft,zhou2016dorefa} rely on variants of straight through estimation to alleviate the rounding operation gradients.
Among these methods, \cite{oh2021automated} bears the most resemblance with the proposed REx method. It minimizes the residual error during training, using weight decay over the residue. The similarity with REx comes from the use of a second order expansion of the quantization errors. However, it discards the quantization error after training while we propose to keep the extra operations in order to ensure a high fidelity to the provided pre-trained model.}

\subsection{Data-Free Quantization}
Nagel \textit{et al. }\revision{\cite{nagel2019data} discuss the necessity to have data available so as to successfully design a quantization pipeline. They proposed a method that consists in balancing the weight ranges over the different layers of a model, using scale invariance properties that are specific to piece-wise affine (e.g. ReLU) activation functions, and relying on a traditional, naive quantization operator \citep{krishnamoorthi2018quantizing}. 
In the current state of data-free quantization research, we see two major trends: methods that focus on the rounding operator itself \citep{squant2022,yvinec2022spiq} and methods that generate synthetic data \citep{li2021mixmix,choi2022s,zhong2022intraq}.
With REx, we aim at enabling hardware flexibility for these methods by allowing to find better trade-offs in terms of accuracy and compression rate given a fixed bit-width.}

\subsection{Flexibility in Quantization}
\revision{In practice, the existing data-free quantization methods only offer a single possible quantized model given a supported bit-width. Nevertheless, most hardwares do not support a wide range of bit-width. For instance, Turing \citep{feng2021apnn} and Untether \citep{robinson_2022} architectures support int4 and int8 quantization while the Nvidia A100 \citep{nivdiaA100} supports int8, int4 and binary (int1) quantization. Conversely, REx circumvents this limitation by offering several trade-offs given a bit-width representation.}

\section{Methodology}

Let's consider $F$, a trained network with $L$ layers and trained weights $W_l$. Given a target integer representation in $b$ bits, e.g. int8 or int4, we consider a quantization operator $Q$. Formally, $Q$ maps $[\min\{W_l\}; \max\{W_l\}] \subset \mathbb{R}$ to the quantized interval $ [- 2^{b-1} ; 2^{b-1} -1] \cap \mathbb{Z}$. The most straightforward way to do so is to apply a scaling $s_{W_l}$ and round $\lfloor \cdot \rceil$ the scaled tensor, \textit{i.e.}:
\begin{equation}\label{eq:quantization_operator}
    Q(W_l) = \left\lfloor\frac{W_l}{\revision{s_{W_l}}}\right\rceil
\end{equation}
With $s_{W_l}$ the quantization scale for \revision{$W_l$ computed as in \cite{krishnamoorthi2018quantizing}, without loss of generality.} Following the standard formulation \citep{gholami2021survey}, a quantization operator $Q$, comes with a de-quantization operator $Q^{-1}$. For the simple quantization operator $Q$ in Equation \eqref{eq:quantization_operator}, a natural choice is $Q^{-1}(Q(W_l)) = s_{W_l} \times Q(W_l)$. Note that, despite the notation, $Q^{-1}$ is not a true inverse , as by definition of the quantized space, there is some loss of information. This loss, called the quantization error, is defined as: $W_l - Q^{-1}(Q(W_l))$.
In data-free quantization, we want to minimize this error in order to achieve the highest possible fidelity to the original model.
In the following section, we describe how we can efficiently reduce the quantization error for a fixed target bit-width $b$.

\begin{figure*}[!t]
    \centering
    \includegraphics[width = 0.58\linewidth]{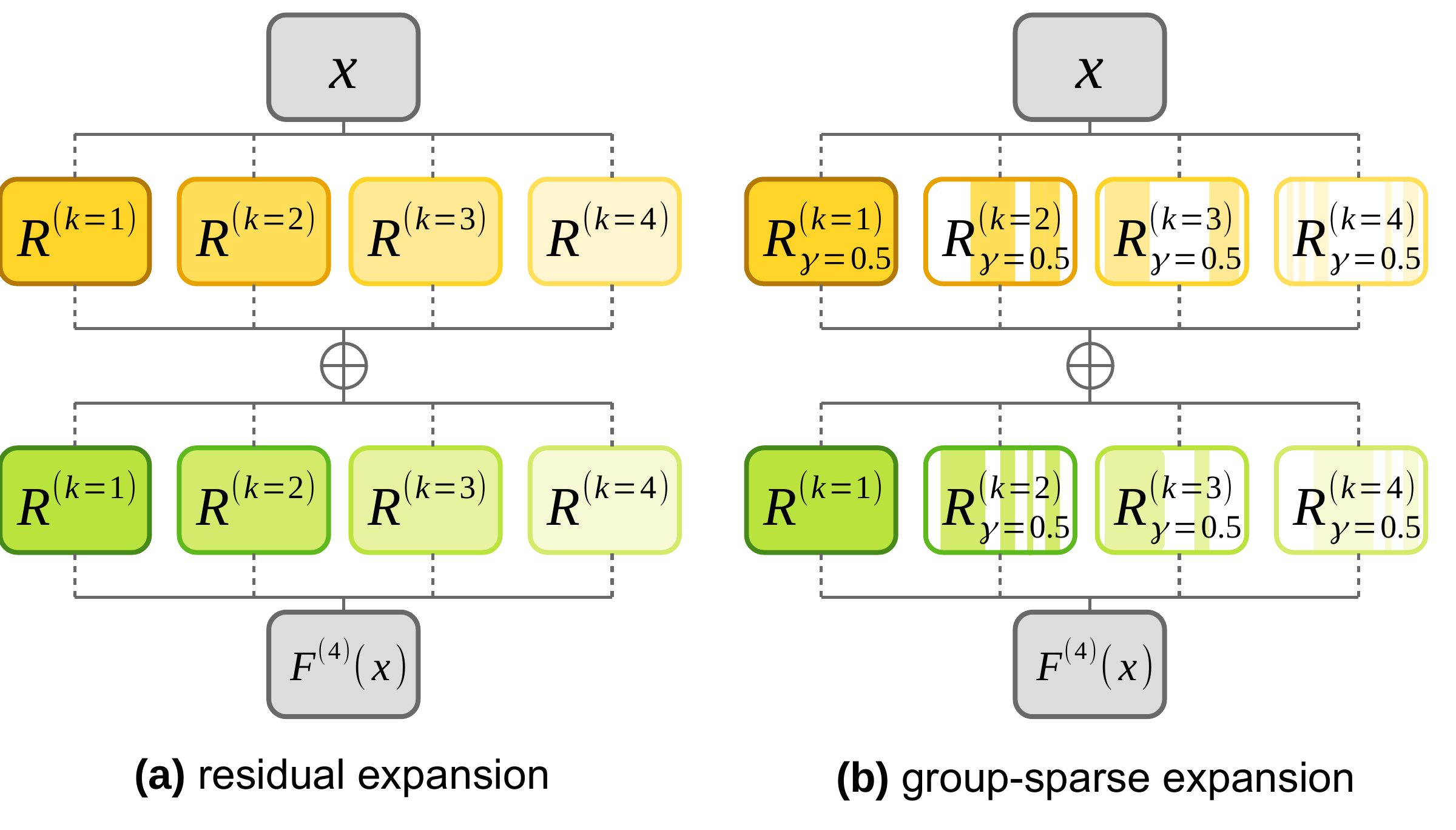}
    \caption{Illustration of the proposed method for a two-layers neural network. \textbf{(a)} residual expansion at order $4$: the intensity of the colormap indicates the magnitude of the residual error. \textbf{(b)} group-sparse expansion for orders $k \geq 1$ ($\gamma = 50\%$ sparsity).}
    \label{fig:DRE_explain}
\end{figure*}

\subsection{Residual Expansion}
We propose to quantize the residual errors introduced by the quantization process. Although the proposed method can be applied to any tensor, let's consider a weight tensor $W$. In the full-precision space ($\mathbb{R}$), its first approximation is $R^1 = Q^{-1}(Q(W))$. To reduce the quantization error, we define $R^2$ as the quantized residual error
\begin{equation}\label{eq:residu_2_definition}
    R^{2} = Q^{-1}\left(Q\left(W - R^1 \right)\right)
\end{equation}
Consequently, during the quantized inference, we compute $R^1X + R^2X\approx WX$ which provides a finer approximation than the simple evaluation $R^1X$. The process can be generalized to any expansion order $K$, leading to the following: 
\begin{equation}\label{eq:residu_definition}
    R^{K} = Q^{-1}\left(Q\left(W - \sum_{k=1}^{K-1} R^k \right)\right)
\end{equation}
The resulting expanded layer is illustrated in Figure \ref{fig:DRE_explain} (a) in the case $K=4$.
Intuitively, an expansion $(R^1,...,R^K)$ provides the approximation $\sum_{k=1}^K R^k$ of $W$ and this approximation converges exponentially fast to the original full-precision weights with respect to $K$. As the support of the quantization error space is smaller than one quantization step, the error decreases by a factor larger than $2^b$ with each expansion term (more details in Appendix \ref{appendix:convergence}). Furthermore, as the quantization error decreases, it is expected that the prediction of the quantized model would achieve a closer match to the original one. This is especially important in the context of data-free quantization as not only do we not have the option to perform fine-tuning to recover accuracy, but also we cannot evaluate the degradation of the model on a calibration/validation set. Nonetheless, we can estimate an upper bound on the maximum error $\epsilon_{\max}$ introduced by quantization on the predictions as 
\begin{equation}\label{eq:main_upper_bound}
    \epsilon_{\max} \leq U = \prod_{l=1}^L \left(\sum_{i=1}^l \left(\frac{1}{2^{b-1}-1}\right)^{K-1}\frac{s_{R^{i}}}{2} + 1 \right) - 1
\end{equation}
\revision{where $s_{R^{i}}$ is the scaling factor from equation \ref{eq:quantization_operator} applied to each residue.}
The detailed derivations are provided in Appendix \ref{appendix:upperbound}. This implies that, in practice and regardless on the quantization operator, a network can be quantized with high fidelity with only a few expansion orders to fit a given bit-width. \revision{Furthermore, this process can also be applied to the activations.}

\subsection{\revision{Input Expansion}}
\revision{Quantizing the weights of a DNN with the aforementioned method already leads to significant memory footprint reduction. However, to significantly decrease the inference runtime, the inputs and activations of each layer also have to be quantized so that each the computations can be processed in the quantized bit-width. For that matter, let $I$ be the input tensor of a layer $l$. We define the expansion of $I$ in quantized residual errors similarly to the weights expansion. Using the generic quantization operator $Q$, we get $I^{(1)} = Q(I)$ and define the $K^\text{th}$ order of quantization as 
\begin{equation}\label{eq:input_expansion}
    I^{(K)} = Q\left(I - \sum_{k=1}^{K-1} Q^{-1}(I^{(k)})\right)
\end{equation}
In order to efficiently exploit the resulting expansions, we propose to bound the accumulated order of the weights and inputs. In other words, if we note $k_1$ the expansion order of a residue from the inputs and $k_2$ a residue from the weights, then we only perform the computations for orders such that $k_1 + k_2 < K$ (the rest being negligible in comparison). As a result, the quantized layer $l$ computes:
\begin{equation}\label{eq:baseline_expansion}
    f: I \mapsto \sum^{k_1 + k_2 \leq K+1}_{
        k_1, k_2 \in \{1,\dots,K\}^2} Q^{-1}\left(I^{(k_1)}\otimes R^{(k_2)}\right)
\end{equation}
where $\otimes$ is the base operation of the layer, e.g. a convolution for a convolutional layer or a matrix multiplication for a fully-connected layer.
Similarly to the weights, the error between the full-precision inputs and the proposed expansion of the inputs converges exponentially fast to $0$ with respect to the order $K$ of the expansion. 
However, with formulations from equations \eqref{eq:residu_definition} and \eqref{eq:input_expansion}, the overhead computations induced by the expansion is non-negligible.
In the following section, we provide a solution to tackle this issue. }

\subsection{Sparse Expansion}
The residual expansion as defined in equation \ref{eq:residu_definition} is based upon the assumption that the quantization error is equally important for every neuron. Thus, we propose to reduce the overhead cost by only expanding the most important neurons. However, in data-free compression we do not have access to activations or gradients: hence, we measure the relative importance of a neuron in a layer by the norm of its weights \citep{molchanov2016pruning}.
The resulting expanded layer is illustrated in Figure \ref{fig:DRE_explain} (b).
Given a target budget $\gamma$ (in \%) of overhead computations, we only expand the $\frac{\gamma}{K-1}$\% most important neurons.
The sparse residue is defined as:
\begin{equation}\label{eq:sparsity_definition}
    \left(R^{(k)}_\gamma\right)_i = (R^{(k)})_i \cdot \mathbbm{1}^{(k)}_\gamma
\end{equation}
where $\mathbbm{1}^{(k)}_\gamma$ indicates the indices of the most important neurons.
Similarly to what precedes, each expansion order is derived sequentially from previous orders and we can bound the quantization error for the sparse expansion (see Appendix \ref{appendix:convergence}).
\revision{The method for computing the weights of the expanded model is summarized in Algorithm \ref{alg:expansion}. 
\begin{algorithm}
\caption{Expansion Algorithm}\label{alg:expansion}
\begin{algorithmic}
\Require trained DNN $f$ with $L$ layers, hyper-parameters : $K$ and $\gamma$, operator $Q$
\State initialize $\gamma^l$ and initialize $f^{(K)}$ as a clone of $f$ with $K$ per-layer kernels
\For{$l \in\{1,\dots,L\}$}
    \State $W\leftarrow$ base kernel of layer $l$ in $f$
    \State $W_{\text{acc}} \leftarrow 0$ accumulated quantization error
    \For{$k \in\{1,\dots,K\}$}
        \State $R^{(k)}_{\gamma^l}\leftarrow Q(W-W_{\text{acc}}) \mathbbm{1}^{(k)}_\gamma$ \Comment{equation \ref{eq:sparsity_definition}}
        \State set $k^{\text{th}}$ kernel of layer $l$ of $f^{(K)}$ with $R^{(k)}_{\gamma^l}$
        \State $W_{\text{acc}} \leftarrow W_{\text{acc}} + Q^{-1}(R^{(k)}_{\gamma^l})$
    \EndFor
\EndFor
\State return $f^{(K)}$
\end{algorithmic}
\end{algorithm}
}

Also note that in the sparse expansion, we allow higher expansion orders to re-consider neurons that were previously considered unimportant. Consequently, on top of improving the exponential convergence as well as lowering the upper bound on the maximum error with respect to the overhead computations, this method systematically outperforms the standard residual expansion in practice. Proof of this result can be found in Appendix \ref{appendix:pyramid}.
The budget $\gamma$ of overhead computations can be set so as not to introduce computational overhead, depending on the bit-width $b$. For example, let's consider a device supporting only 8 and 1 bit (binary) quantization. If we want to achieve the same latency as 8 bit quantization using only 1bit quantization we will have a budget lower than $700\%$ overhead w.r.t. a naive 1 bit quantization. Consequently, for full expansions, we get $\gamma\ \leq \frac{8}{1} - 1 = 700\%$. This budget is then split across layers using a simple linear repartition. This strategy gives more emphasis to the layers closest to the prediction head which also correspond to the largest layers, and empirically provides the best results \cite{yvinec2021red}.
As a result, given a number bit operations (BOPS), the expanded model can better fit the inference device while preserving the full-precision accuracy. Furthermore, all the added computations are performed in parallel which reduces their cost in practice. It allows better trade-offs in terms of accuracy and quantization compression rate, as will be shown in the upcoming experiments.

\section{Quantization Experiments}

In the following sections, we first go through the implementation requirements and efficient strategies to fully leverage the proposed expansions. Second, we perform a comparison of each expansion methods in order to show the flexibility of REx with respect to the bit-width. Third, we compare REx to other quantization schemes under the constraint of equal bit operations. Finally, we validate for each expansion their respective upper bound on the maximum error with respect to the original predictions. 

\subsection{Implementation Details and Benchmarks}

We ran our tests on 6 different backbones, including ConvNets and transformers and 5 tasks from both computer vision and natural language processing. We used ImageNet \citep{imagenet_cvpr09}, Pascal VOC 2012 \citep{pascal-voc-2012}, CityScapes dataset \citep{cordts2016cityscapes} and GLUE \citep{wang-etal-2018-glue} and common sense reasoning benchmarks (details in Appendix \ref{appendix:implem}). 

Unless stated otherwise, we apply symmetric, static, per-channel quantization as defined in \cite{gholami2021survey} and perform batch-normalization folding prior to any processing using the optimal method from \cite{yvinec2022fold}. In order to leverage the existing efficient implementations of the convolutional layers and fully-connected layers in CUDA, we propose to implement the expanded layer using a single kernel rather than $K$ kernels. This is achieved by concatenating the kernels along the output dimension. Consequently, the challenge of efficiently splitting the computations to fully leverage the target device computational power is left to the inference engine. In practice, this results in both better performance and less work in order to adapt the method to existing engines such as OpenVino \citep{openvino} and TensorRT \citep{tensorrt}. We detail the implementation and overhead of the addition of the residual computations in Appendix \ref{sec:appendix_sum}. In the following section, we demonstrate the ability of REx to find good accuracy \textit{vs} speed trade-offs.

\subsection{Flexible Quantization}
\begin{figure}[!t]
    \centering
    \includegraphics[width = 0.9\linewidth]{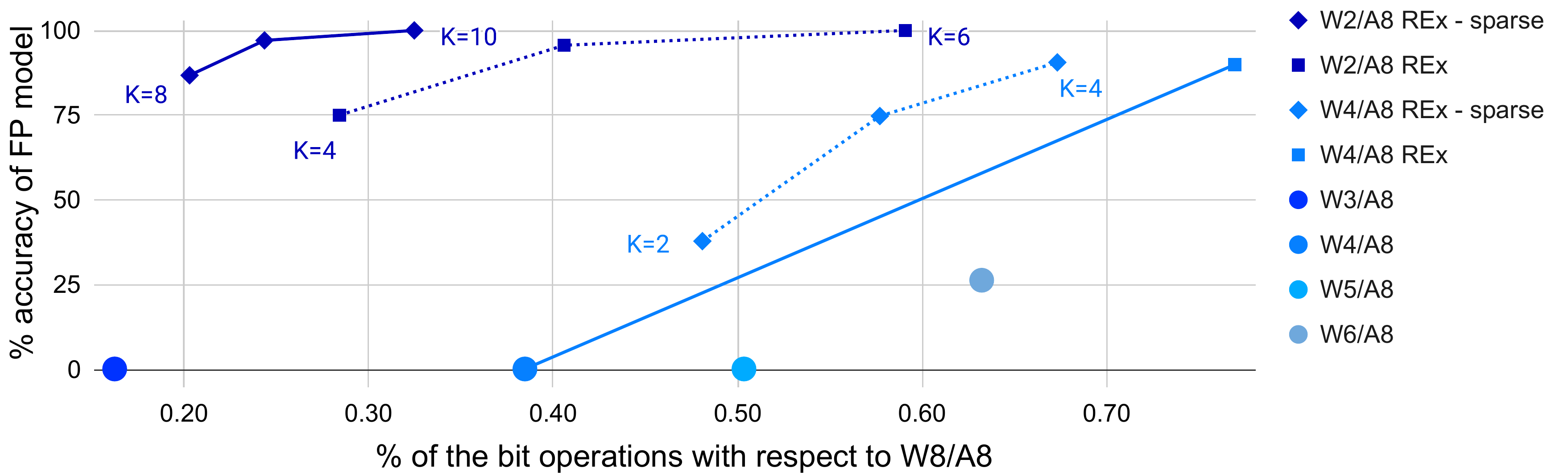}
    \caption{Accuracy \textit{vs.} inference time, for EfficientNet B0. The higher (accuracy) and the further to the left (inference cost) the better. The circles show the baseline results with W3/A8, W4/A8, W5/A8 and W6/A8 quantization. The dashed lines show the trade-offs performance of REx in W4/A8 and ternary quantization (W2/A8). Finally, the plain lines show REx (with sparsity at $10\%$) also in W4/A4 and ternary quantization. The numbers in the symbols stands for the expansion order. REx, and \textit{a fortiori} the sparse version, enables better trade-offs.}
    \label{fig:ablation}
\end{figure}

Figure \ref{fig:ablation} shows different trade-offs enabled by REx on different bit-widths for an EfficientNet-B0 on ImageNet. First, the baseline quantization with the baseline quantization operator from \cite{krishnamoorthi2018quantizing} (as depicted by the circles of different colors, one for each bit width) offers no trade-off possibility given a specific bit-width and usually performs poorly below int8 quantization (e.g. barely reaching $20.29\%$ top1 accuracy in W6/A8 quantization). REx, however, in the same setup, offers several trade-offs for each specific bit-width (e.g. int4 and ternary on Figure \ref{fig:ablation}) and supporting hardware. Furthermore, the sparse expansion enables finding more potential trade-offs (by varying the budget and expansion order) for every bit-width. Those trade-offs are generally more interesting than comparable ones obtained using the baseline method, which empirically confirms the theoretical results (Appendix \ref{appendix:pyramid}).

Furthermore, Figure \ref{fig:ablation} shows that using higher order, sparse residues allows to find even better trade-offs, as, in this case, e.g. in W2/A8 we reach full-precision accuracy at order $10$ with 10\% sparse residues. This shows that the process converges fast with respect to the sparsity rates. All in all, these results demonstrate the flexibility of REx to find good accuracy \textit{v.s.} speed trade-offs, given a budget of total bit operations (BOPs) to fit. In the following section, we evaluate the ability of REx to outperform existing quantization methods in terms of equal bops.

\subsection{Main Results}\label{compsota}
\subsubsection{Experiments on Computer Vision Models}

In order to highlight the benefits of residual quantization errors expansions as a stand alone improvement upon existing methods with equal BOPs, we compare REx using the naive quantization operator from \citep{krishnamoorthi2018quantizing} on a variety of reference benchmarks. First, in Table \ref{tab:compar_sota_equalbits}, we report the performance on three different computer vision networks between state-of-the-art methods in W6/A6 quantization and REx using a sparse expansion at order $K=2$ using $50\%$ of a 4 bit representation in order to get a similar total number of bit operations (150\% of 4 bits $\approx$ 6 bits). For all networks, REx significantly outperforms recent state-of-the-art data-free quantization methods at equal BOPs. Furthermore, we confirm these results on object detection and image segmentation as shown in Figure \ref{fig:ssd}. We can observe that REx can maintain the full precision accuracy while dividing by $3.23$ the number of bit operations required to run an inference.

\begin{table}[!t]
\caption{Comparison at equal BOPs with existing methods in W6/A6 and REx with W4/A6 +50\% of one 4 bit residue.}
\label{tab:compar_sota_equalbits}
\centering
\setlength\tabcolsep{3pt}
        \begin{tabular}{c|c|c|c|c}
         \hline
         DNN & method & year & bits & Accuracy \\
         \hline
         \multirow{8}{*}{ResNet 50} & \multicolumn{3}{c|}{full-precision} & 76.15 \\
         \cline{2-5}
         & DFQ \cite{nagel2019data} & ICCV'19 & \revision{W6/A6} & 71.36 \\
         & ZeroQ \cite{cai2020zeroq} & CVPR'20 & \revision{W6/A6} & 72.93\\
         & DSG \cite{zhang2021diversifying} & CVPR'21 & \revision{W6/A6} & 74.07 \\
         & GDFQ \cite{xu2020generative} & ECCV'20 & \revision{W6/A6} & 74.59 \\
         & SQuant \cite{squant2022} & ICLR'22 & \revision{W6/A6} & 75.95 \\
         & SPIQ \cite{yvinec2022spiq} & WACV'23 & \revision{W6/A6} & 75.98 \\
         & REx & - & 150\% $\times$ \revision{W4/A6} & \textbf{76.01} \\
         \hline
         \multirow{5}{*}{MobNet v2} & \multicolumn{3}{c|}{full-precision} & 71.80 \\
         \cline{2-5}
        & DFQ \cite{nagel2019data} & ICCV'19 & \revision{W6/A6} & 45.84 \\
        & SQuant \cite{squant2022} & ICLR'22 & \revision{W6/A6} & 61.87 \\
        & SPIQ \cite{yvinec2022spiq} & WACV'23 & \revision{W6/A6} & 63.24 \\
        & REx & - & 150\% $\times$ \revision{W4/A6} & \textbf{64.20} \\
         \hline
         \multirow{4}{*}{EffNet B0} & \multicolumn{3}{c|}{full-precision} & 77.10 \\
         \cline{2-5}
        & DFQ \cite{nagel2019data} & ICCV'19 & \revision{W6/A6} & 43.08 \\
        & SPIQ \cite{yvinec2022spiq} & ICLR'22 & \revision{W6/A6} & 54.51 \\
        & REx & - & 150\% $\times$ \revision{W4/A6} & \textbf{57.63} \\
         \hline
        \end{tabular}
\end{table}

\begin{figure}[!t]
\centering
\includegraphics[width = \linewidth]{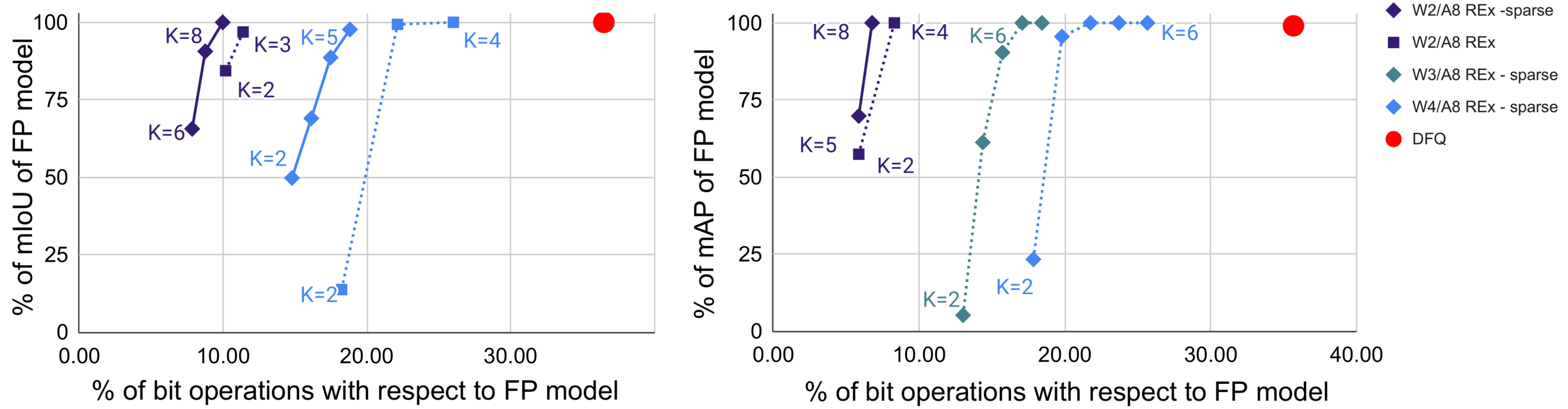}
\caption[Two numerical solutions]{(left) Mean intersection over union (mIoU) of a Deeplab V3+ with MobileNet V2 backbone on CityScapes for semantic segmentation. (right) Mean average precision (mAP) of a SSD with MobileNet V2 backbone on Pascal VOC for object detection. We add the performance of a data-free quantization solution, DFQ \citep{nagel2019data} for comparison. }
\label{fig:ssd}
\label{fig:deeplab}
\end{figure}

\subsubsection{Experiments on NLP}
In Table \ref{tab:comparison_sota_bert}, we perform a similar experiment on NLP using Bert \citep{devlin2018bert}. We can observe the generalization of our results from ConvNets to Transformers, REx can find better accuracy per bits trade-offs as compared to four references including non-uniform quantization \citep{miyashita2016convolutional}. Bert is a pre-trained model with 86 million parameters which is now considered a medium sized model. Both the full-precision and quantized models can fit on a single middle range GPU. However, recent state-of-the-art models, such as OPT \cite{zhang2022opt}, are so large that they need multiple gpus just to be loaded on memory. These models ought to be compressed for sustainable usage.
In the following section, we generalize the performance of REx to extreme model sizes.

\begin{table*}[!t]
\caption{GLUE task quantized in W4/A8. We consider the BERT transformer architecture \citep{devlin2018bert} and provide the original performance from the article (original) of BERT on GLUE as well as our reproduced results (reproduced). REx is applied to the weights with 3 bits + 33\% sparse expansion.}
\label{tab:comparison_sota_bert}
\centering
\setlength\tabcolsep{1pt}
\begin{subtable}{.35\textwidth}
\hfill
\begin{tabular}{|c|c|c|}
\hline
task & original & reproduced \\
\hline
CoLA & 49.23 & 47.90 \\
SST-2 & 91.97 & 92.32 \\
MRPC & 89.47/85.29 & 89.32/85.41 \\
STS-B & 83.95/83.70 & 84.01/83.87 \\
QQP & 88.40/84.31 & 90.77/84.65 \\
MNLI & 80.61/81.08 & 80.54/80.71 \\
QNLI & 87.46 & 91.47 \\
RTE & 61.73 & 61.82 \\
WNLI & 45.07 & 43.76 \\
\hline
\end{tabular}
\end{subtable}
\begin{subtable}{.63\textwidth}
\begin{tabular}{|c|c|c|c|c|c|}
\hline
uniform \cite{krishnamoorthi2018quantizing} & log \cite{miyashita2016convolutional} & SQuant \cite{squant2022} & SPIQ \cite{yvinec2022spiq} & REx \\
\hline
45.60 & 45.67 & \underline{46.88} & 46.23 & \textbf{47.02} \\
\underline{91.81} & 91.53 & 91.09 & 91.01 & \textbf{91.88} \\
88.24/84.49 & 86.54/82.69 & \textbf{88.78}/\textbf{85.24} & 88.78/85.06 &  \underline{88.71/85.12}\\
\underline{83.89}/\underline{83.85} & {84.01}/83.81 & 83.80/83.65 & 83.49/83.47 & \textbf{83.92}/\textbf{83.85} \\
89.56/83.65 & 90.30/84.04 & \underline{90.34}/\underline{84.32} & 90.30/84.21 & \textbf{90.50}/\textbf{84.35} \\
\underline{78.96}/79.13 & 78.96/79.71 & 78.35/79.56 & 78.52/\underline{79.86} & \textbf{79.03}/\textbf{79.96} \\
89.36 & 89.52 & \textbf{90.08} & \underline{89.64} & \textbf{90.08} \\
\underline{60.96} & 60.46 & 60.21 & 60.21 & \textbf{61.20} \\
39.06 & 42.19 & \underline{42.56} & 42.12 & \textbf{42.63} \\
\hline
\end{tabular}
\end{subtable}
\end{table*}

\subsubsection{Application to Handling Outliers in LLMs}
\revision{A known pitfall \cite{dettmers2022llm} for quantization on LLMs, comes from the presence of extreme outliers among their weight values. These outliers stretch out the weight values range and increase the scaling factor in Equation \ref{eq:quantization_operator}, which, in turn, causes smaller weights to be rounded abruptly to zero. Worse, as suggested in \cite{dettmers2022llm}, this phenomenon seems to occur more as the model size increases and might appear as a major problem for future work in large DNN quantization. In order to overcome this challenge, we adapt REx to only quantize the outliers in a residue using binary values (W1/A16) while the remaining weights are quantized in int4 (W4/A16). As a result, the overhead from REx is limited to a binary expansion with over 99.8\% sparsity. As listed in Table \ref{tab:comparison_sota_llm}, our evaluation on common sense reasoning tasks demonstrates that REx provides a significant improvement over other quantization operators at virtually no cost. Hence, REx appears as a ready-to-use solution to the outlier problem for quantization of LLMs.}

\begin{table}[!t]
\caption{Evaluation on Common sense reasoning benchmarks for OPT-13B \cite{zhang2022opt} LLM quantized in W4/A16. For each quantization operator DFQ \cite{nagel2019data}, SQuant \cite{squant2022} and PowerQuant \cite{yvinec2023power}, we share performance with and without REx (noted with check marks). We also provide the original full-precision (FP) performance.}
\label{tab:comparison_sota_llm}
\centering
\begin{tabular}{cclccllllcc}
\cline{2-2} \cline{4-5} \cline{7-8} \cline{10-11}
\multicolumn{1}{c|}{} & \multicolumn{1}{c|}{FP} & \multicolumn{1}{c|}{} & \multicolumn{2}{c|}{DFQ \cite{nagel2019data}} & \multicolumn{1}{c|}{} & \multicolumn{2}{c|}{SQuant \cite{squant2022}} & \multicolumn{1}{c|}{} & \multicolumn{2}{c|}{PowerQuant \cite{yvinec2023power}} \\ \cline{1-2} \cline{4-5} \cline{7-8} \cline{10-11} 
\multicolumn{1}{|c|}{\textbf{Use REx}} & \multicolumn{1}{c|}{-} & \multicolumn{1}{c|}{} & \multicolumn{1}{c|}{\xmark} & \multicolumn{1}{c|}{\cmark} & \multicolumn{1}{c|}{} & \multicolumn{1}{c|}{\xmark} & \multicolumn{1}{c|}{\cmark} & \multicolumn{1}{c|}{} & \multicolumn{1}{c|}{\xmark} & \multicolumn{1}{c|}{\cmark} \\ \cline{1-2} \cline{4-5} \cline{7-8} \cline{10-11} 
\multicolumn{1}{|c|}{HellaSwag} & \multicolumn{1}{c|}{52.43} & \multicolumn{1}{c|}{} & \multicolumn{1}{c|}{49.25} & \multicolumn{1}{c|}{\textbf{50.14}} & \multicolumn{1}{c|}{} & \multicolumn{1}{c|}{49.23} & \multicolumn{1}{c|}{\textbf{50.21}} & \multicolumn{1}{c|}{} & \multicolumn{1}{c|}{\textbf{51.29}} & \multicolumn{1}{c|}{50.98} \\ \cline{1-2} \cline{4-5} \cline{7-8} \cline{10-11} 
\multicolumn{1}{|c|}{OpenBookQA} & \multicolumn{1}{c|}{27.20} & \multicolumn{1}{c|}{} & \multicolumn{1}{c|}{\textbf{25.80}} & \multicolumn{1}{c|}{25.40} & \multicolumn{1}{c|}{} & \multicolumn{1}{c|}{25.40} & \multicolumn{1}{c|}{\textbf{26.20}} & \multicolumn{1}{c|}{} & \multicolumn{1}{c|}{25.80} & \multicolumn{1}{c|}{\textbf{27.80}} \\ \cline{1-2} \cline{4-5} \cline{7-8} \cline{10-11} 
\multicolumn{1}{|c|}{ARC-E} & \multicolumn{1}{c|}{61.91} & \multicolumn{1}{c|}{} & \multicolumn{1}{c|}{59.93} & \multicolumn{1}{c|}{\textbf{61.91}} & \multicolumn{1}{c|}{} & \multicolumn{1}{c|}{59.97} & \multicolumn{1}{c|}{\textbf{61.95}} & \multicolumn{1}{c|}{} & \multicolumn{1}{c|}{\textbf{60.82}} & \multicolumn{1}{c|}{60.52} \\ \cline{1-2} \cline{4-5} \cline{7-8} \cline{10-11} 
\multicolumn{1}{|c|}{ARC-C} & \multicolumn{1}{c|}{32.94} & \multicolumn{1}{c|}{} & \multicolumn{1}{c|}{30.2} & \multicolumn{1}{c|}{\textbf{32.42}} & \multicolumn{1}{c|}{} & \multicolumn{1}{c|}{30.12} & \multicolumn{1}{c|}{\textbf{32.34}} & \multicolumn{1}{c|}{} & \multicolumn{1}{c|}{31.57} & \multicolumn{1}{c|}{\textbf{32.94}} \\ \cline{1-2} \cline{4-5} \cline{7-8} \cline{10-11} 
\multicolumn{1}{|c|}{Winogrande} & \multicolumn{1}{c|}{65.04} & \multicolumn{1}{c|}{} & \multicolumn{1}{c|}{64.56} & \multicolumn{1}{c|}{\textbf{64.72}} & \multicolumn{1}{c|}{} & \multicolumn{1}{c|}{64.48} & \multicolumn{1}{c|}{\textbf{64.88}} & \multicolumn{1}{c|}{} & \multicolumn{1}{c|}{64.88} & \multicolumn{1}{c|}{\textbf{65.04}} \\ \cline{1-2} \cline{4-5} \cline{7-8} \cline{10-11} 
\multicolumn{1}{|c|}{PiQA} & \multicolumn{1}{c|}{76.88} & \multicolumn{1}{c|}{} & \multicolumn{1}{c|}{75.84} & \multicolumn{1}{c|}{\textbf{76.17}} & \multicolumn{1}{c|}{} & \multicolumn{1}{c|}{75.84} & \multicolumn{1}{c|}{\textbf{76.30}} & \multicolumn{1}{c|}{} & \multicolumn{1}{c|}{75.90} & \multicolumn{1}{c|}{\textbf{76.93}} \\ \cline{1-2} \cline{4-5} \cline{7-8} \cline{10-11} 
\multicolumn{1}{|c|}{BoolQ} & \multicolumn{1}{c|}{65.90} & \multicolumn{1}{c|}{} & \multicolumn{1}{c|}{54.71} & \multicolumn{1}{c|}{\textbf{65.54}} & \multicolumn{1}{c|}{} & \multicolumn{1}{c|}{54.28} & \multicolumn{1}{c|}{\textbf{65.38}} & \multicolumn{1}{c|}{} & \multicolumn{1}{c|}{\textbf{70.43}} & \multicolumn{1}{c|}{69.45} \\ \cline{1-2} \cline{4-5} \cline{7-8} \cline{10-11} 
\multicolumn{1}{l}{} & \multicolumn{1}{l}{} &  & \multicolumn{1}{l}{} & \multicolumn{1}{l}{} &  &  &  &  & \multicolumn{1}{l}{} & \multicolumn{1}{l}{} \\ \cline{1-2} \cline{4-5} \cline{7-8} \cline{10-11} 
\multicolumn{1}{|c|}{Average Score} & \multicolumn{1}{c|}{54.61} & \multicolumn{1}{c|}{} & \multicolumn{1}{c|}{51.47} & \multicolumn{1}{c|}{\textbf{53.76}} & \multicolumn{1}{c|}{} & \multicolumn{1}{c|}{51.33} & \multicolumn{1}{c|}{\textbf{53.91}} & \multicolumn{1}{c|}{} & \multicolumn{1}{c|}{54.38} & \multicolumn{1}{c|}{\textbf{54.81}} \\ \cline{1-2} \cline{4-5} \cline{7-8} \cline{10-11} 
\end{tabular}
\end{table}

\subsection{Empirical Validation of the Theoretical Bounds}

\begin{table}[!t]
\caption{Upper bound $U$ (see theorem \ref{thm:dre_upperbound} and \ref{thm:slim_upperbound}) over the maximum error as compared to the corresponding empirical measurement $U_{\text{empirical}}$ of that error for a VGG 16 \citep{simonyan2014very} trained on ImageNet. The closer the upper bound $U$ to the value $U_{\text{empirical}}$ the better.}
\label{tab:upper_bound}
\centering
\setlength\tabcolsep{2pt}
\begin{tabular}{c|c|c|c|c}
\hline
weights bit-width & expansion order $K$ & sparsity & $U$ & $U_{\text{empirical}}$\\
\hline
8 & 1 & \xmark & 0.12 & 0.05\\
8 & 4 & \xmark & 1.99 $\times 10^{-7}$ & 1.78 $\times 10^{-7}$ \\
8 & 2 & 50\% & 0.06 & 0.05 \\
8 & 4 & 50\% & 1.17 $\times 10^{-7}$ & 0.65 $\times 10^{-7}$ \\
\hline
\end{tabular}
\end{table}

Having shown the interest of REx for quantizing various architectures for computer vision and NLP tasks, we now empirically confirm its mathematical guarantees. In Table \ref{tab:upper_bound}, we validate the proposed upper bound $U$ in Equation \ref{eq:main_upper_bound} on the maximum error on the predictions on a VGG-16 \citep{simonyan2014very} trained on ImageNet. 
The tightness of the provided theoretical results can be estimated from the gap between our estimation and the empirical maximum error $U_{\text{empirical}}$ from quantization on the predictions, which is measured as the infinite norm between the full-precision and quantized logits. We observe that a naïve 8-bits quantization (\textit{i.e.} no expansion) leads to an upper bound $U = 0.12$, while we observe $U_{\text{empirical}}=0.05$. The norms of the logits is equal to $0.3423$. Therefore, the proposed upper bound is relatively tight and significantly lower than the logits magnitude: in such a case, due to overconfidence, the error shouldn't affect the classification. The proposed upper bound is even tighter for larger values of $K$, and becomes lower and lower (for both the theoretical and corresponding empirical maximum errors) when introducing sparsity. This further demonstrates the good properties of the proposed expansion approximation in REx in addition to the relevance of its theoretical guarantees, which are critical in data-free quantization.

\subsection{Flexibility with respect to the Quantization Operator}
\begin{table}[!t]
    \centering
    \setlength\tabcolsep{2pt}
    \caption{We report the different trade-offs achieved with REx expanding over different proposed quantization operators in \revision{W4/A4} as compared to their performance in \revision{W8/A8}, \revision{on a MobileNet V2}.}
    \begin{tabular}{c|c|c|c|c|c|c}
\hline
    method & W4/A4 & \revision{$\text{W4}_{\text{+ 25\%}}$}/A4 & \revision{$\text{W4}_{\text{+ 50\%}}$}/A4 & \revision{$\text{W4}_{\text{+ 75\%}}$}/A4 & W6/A6 &  W8/A8 \\
\hline
\hline
        naive \cite{krishnamoorthi2018quantizing} & 0.1 & 53.11 & 64.20 & \textbf{71.61} & 51.47 & 70.92 \\
        SQuant \cite{squant2022} & 4.23 & 58.64 & 67.43 & \textbf{71.74} & 60.19 & 71.68 \\
        SPIQ \cite{yvinec2022spiq} & 5.81 & 59.37 & 68.82 & \textbf{71.79} & 63.24 & \textbf{71.79} \\
        AdaRound \cite{nagel2020up} & 56.17 & 61.30 & 69.80 & \textbf{71.77} & 68.71 & \textbf{71.75} \\
        BrecQ \cite{li2021brecq} & 66.57 & 70.94 & 71.28 & \textbf{71.76} & 70.45 & \textbf{71.76} \\
\hline
    \end{tabular}
    \label{tab:flexibility_operator}
\end{table}

Most recent approaches for data-free quantization focus on designing better quantization operators. Interestingly, as we already hinted on large language models, our approach is agnostic to the choice of the quantization operator and can thus be combined with these approaches without bells and whistles. In Table \ref{tab:flexibility_operator}, we report the possible trade-offs achievable with REx combined with recent approaches focusing on the quantization operator \revision{on MobileNet V2}. 
\revision{The different trade-offs are sorted in ascending order in terms of added overhead operations, e.g. $\text{W4}_{\text{+ 25\%}}$ leads to less operations than $\text{W4}_{\text{+ 50\%}}$.}
First, when used with SQuant \citep{squant2022}, REx achieves full-precision accuracy in W4/A4 \revision{with only $75\%$ overhead}, even outperforming W8/A8 quantization. SPIQ \citep{yvinec2022spiq}, can also be adapted with REx in order to achieve good accuracy using only 4 bits representation as it benefits from finer weight quantization. This explains the slightly higher accuracies than SQuant using 25\% and 50\% sparsity. Finally, with AdaRound \citep{nagel2020up} and BrecQ \citep{li2021brecq}, two PTQ techniques, we observe similar results as expected.
In particular, BrecQ which already achieves decent accuracy in W4/A4 with a $5.23$ points accuracy drop gets closer to the original accuracy ($0.86$ point accuracy drop) using a quarter of the expansion.
Those results demonstrate REx versatility. 

\section{Conclusion}

In this work, we proposed a novel data-free quantization method, dubbed REx, that consists in an expansion of residual quantization errors. Furthermore, we proposed a group-sparse version of the residual expansion that allows to find the best accuracy \textit{v.s.} speed trade-offs. We demonstrated the exponential convergence of the quantized weights obtained through the different expansion methods towards the full-precision model. These theoretical guarantees are crucial in the context of data-free quantization where we cannot empirically measure the accuracy degradation in an industrial application context. As such, REx allows to find superior trade-offs for several bit-width representations, which allows better flexibility and adaptability to specific hardwares. 

In particular, we showed the added value of REx through extensive empirical validation. It appears that REx significantly outperforms recent data-free quantization methods on a wide range of ConvNet architectures applied to image classification, object detection, semantic segmentation as well as transformers architectures on GLUE text classification. Furthermore, we showed that REx allows to efficiently handle outliers within the weight distributions, a well-known pitfall when attempting to quantize LLMs, using a single binary residual to account for outliers. Lastly, the ideas presented in this paper are orthogonal to most recent approaches focusing on improving the quantization operator, and hence can straightforwardly be combined with those approaches.

\subsection{Limitations:} The residual expansion method introduced in this paper does not adapt to the inter-layer importance and runtime cost discrepancies. An interesting future work would thus consist in applying more expansion orders on the most important layers w.r.t. the model accuracy, as well as using fewer orders for the most computationally expensive layers.


{\small
\bibliographystyle{unsrt}
\bibliography{output.bib}
}
\newpage
\appendix

\section{Exponential Convergence}\label{appendix:convergence}
The exponential convergence can be proved for the two methods: expansion and sparse expansion. We first prove it for the expansion on sequential models, then generalize the result to more diverse architectures.
Before detailing the proof of lemma \ref{thm:dre}, we empirically motivate the assumption of symmetry over the weight values distribution. In Figure \ref{fig:symmetry_assumption}, we plot the distributions of the weights of several layers of a ResNet 50 trained on ImageNet.
The assumption is often satisfied in practice. Furthermore, in any instances where it would not be satisfied, it can be enforced using asymmetric quantization.
\begin{lemma}\label{thm:dre}
Let $f$ be a layer with weights $W \in \mathbb{R}^n$ with a symmetric distribution. 
We denote $R^{(k)}$ the $\text{k}^{\text{th}}$ quantized weight from the corresponding residual error.
Then the error between the rescaled $W^{(K)}=Q^{-1}(R^{(K)})$ and original weights $W$ decreases exponentially, \textit{i.e.}:
\begin{equation}\label{eq:exponential_decrease}
    \left| w - \sum_{k = 1}^{K} w^{(k)} \right| \leq \left(\frac{1}{2^{b-1}-1}\right)^{K-1} \frac{{\left(s_{R^{(K)}}\right)}_i}{2}
\end{equation}
where $w$ and $w^{(k)}$ denote the elements of $W$ and $W^{(k)}$ and ${\left(s_{R^{(k)}}\right)}_i$ denotes the row-wise rescaling factor at order $k$ corresponding to $w$, as defined in equation \ref{eq:quantization_operator}.
\end{lemma}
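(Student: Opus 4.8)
The plan is to prove the bound by induction on the expansion order $K$, with the atomic ingredient being the elementary fact that a single pass through the quantizer incurs a rounding error of at most half a quantization step. Concretely, for any tensor $V$ quantized with row-wise scale $s_V$, writing $Q^{-1}(Q(V)) = s_V \lfloor V/s_V \rceil$ and using $|x - \lfloor x \rceil| \le \tfrac12$ gives the element-wise bound $|V - Q^{-1}(Q(V))| \le s_V/2$. Taking $V = W$ at order $1$ yields $|w - w^{(1)}| \le (s_{R^{(1)}})_i/2$, which is exactly the claimed inequality for $K=1$, since the geometric prefactor $(2^{b-1}-1)^{-(K-1)}$ equals $1$ there. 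This settles the base case.

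For the inductive step I would first record the key algebraic identity that the $K$-th residue quantizes precisely the running approximation error. Setting $E_{K-1} = w - \sum_{k=1}^{K-1} w^{(k)}$, the definition in \eqref{eq:residu_definition} gives $w^{(K)} = Q^{-1}(Q(E_{K-1}))$, so that $E_K = E_{K-1} - Q^{-1}(Q(E_{K-1}))$ is itself a one-step quantization error; the atomic bound then gives $|E_K| \le (s_{R^{(K)}})_i/2$. The whole argument next reduces to controlling how the scales shrink from one order to the next. Since the scale is set from the dynamic range of the tensor being quantized, $s_{R^{(K)}} = \max|E_{K-1}|/(2^{b-1}-1)$, and since the previous order already guarantees $\max|E_{K-1}| \le s_{R^{(K-1)}}/2$, I obtain a geometric contraction of the form $(s_{R^{(k)}})_i \le \tfrac{1}{2^{b-1}-1}(s_{R^{(k-1)}})_i$. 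Telescoping this contraction across orders and feeding it into the one-step bound is what produces the factor $(2^{b-1}-1)^{-(K-1)}$ in the statement.

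The point where the symmetry hypothesis is essential — and where I expect the main difficulty — is in justifying that the scale formula $\max|\cdot|/(2^{b-1}-1)$ remains the correct (symmetric) scale for the residual errors $E_{k-1}$ at every order, so that the per-step contraction factor is genuinely $\tfrac{1}{2^{b-1}-1}$ and not something larger. Symmetry of $W$ makes the first scale use the full symmetric range, and one must argue that the residual errors inherit an (at least approximately) symmetric spread about zero; an asymmetric residue could otherwise force a larger scale and spoil the contraction. Two pieces of bookkeeping remain delicate: first, $(s_{R^{(k)}})_i$ is defined per row, so the maxima and the contraction must be taken within the corresponding row to keep the constant $(s_{R^{(K)}})_i/2$ attached to the intended scaling factor; and second, reconciling the atomic bound $|E_K| \le (s_{R^{(K)}})_i/2$ with the telescoped scale contraction so that the geometric prefactor lands against the stated scaling factor is the one step I would carry out with care rather than by routine calculation. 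Once the per-order contraction is established under symmetry, assembling the telescoped scales into the stated power of $\tfrac{1}{2^{b-1}-1}$ is straightforward.
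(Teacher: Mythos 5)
Your proposal follows essentially the same route as the paper's proof: the base case via the half-step rounding bound $|a-\lfloor a\rceil|\leq \tfrac12$, the observation that each residue quantizes the running error $E_{K-1}$, and the use of symmetry to write $s_{R^{(K)}}=\max|E_{K-1}|/(2^{b-1}-1)$ so that the scales contract geometrically, closed off by induction. The bookkeeping subtlety you flag about which scale the geometric prefactor attaches to is present in the paper's own argument as well (its statement and proof are equally loose on this point), so your treatment is faithful to, and if anything more careful than, the original.
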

We work on expanded layers which compute 
\begin{equation}\label{eq:define_dre}
    f^{(K)} : x \mapsto \sigma \left( \sum_{k=1}^K R^{(k)}Q(x)s_{R^{(k)}}s_x + b\right)
\end{equation}
\begin{proof}
Assume $K = 1$, then $W^{(1)}$ is the result of the composition of inverse quantization operator and quantization operator, i.e. $W^{(1)} = s_W \left\lfloor\frac{W}{s_{W}}\right\rceil$.
By definition of the rounding operator we know that $|\lfloor a \rceil - a | \leq 0.5$.
Thus we have $| w - w^{(1)} | \leq s_W / 2$.
Now in the case $k=2$, we have by definition of the quantization of the residual error and the property of the rounding operator
\begin{equation}\label{eq:6}
    \left|\left\lfloor \frac{w - w^{(1)}}{s_{R^{(2)}}} \right\rceil - \frac{w - w^{(1)}}{s_{R^{(2)}}} \right|\leq \frac{1}{2}
\end{equation}
where $s_{R^{(2)}}$ is the rescaling factor in the second order residual $R^2$ computed from $w - w^{(1)}$. The quantized weights are thus given by:
\begin{equation}
    \left| w - \sum_{i = 1}^{2} w^{(i)} \right|\leq \frac{s_{R^{(2)}}}{2}
\end{equation}
Because the weight distribution is symmetric we know that for any $k$, $s_{R^{(K)}} = \frac{\max\{w - \sum_{k = 1}^{K-1} w^{(k)}\}}{2^{b-1}-1}$ or any other definition of the delta in the full-precision space.
Also, by definition we have $\max\{w - \sum_{k = 1}^{K-1} w^{(k)}\} \leq s_{R^{(K)}}$. Thus:
\begin{equation}
    \left| w - \sum_{k = 1}^{K} w^{(k)} \right| \leq \left(\frac{1}{2^{b-1}-1}\right) \frac{s_{R^{(K)}}}{2}
\end{equation}
We conclude by using a trivial induction proof.
\end{proof}
As an immediate consequence we have the following corollary which justifies the expansion appellation:
\begin{corollary}\label{thm:expansion}
Let $f$ be a layer of real-valued weights $W$ with a symmetric distribution and $R^{(k)}$ the $\text{k}^{\text{th}}$ quantized weight from the corresponding residual error. Then,
\begin{equation}
    \mathbb{E}\left[\left\| f - \sum_{k=1}^K f^{(k)} \right\|\right] \geq \mathbb{E}\left[\left\| f - \sum_{k=1}^{K+1} f^{(k)} \right\|\right]
\end{equation}

and $f = \sum_{k=1}^\infty f^{(k)}$.

\end{corollary}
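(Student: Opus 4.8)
The plan is to derive both claims directly from the entrywise bound of Lemma~\ref{thm:dre}, reducing everything to the behaviour of the weight residual $r^{(K)} := W - \sum_{k=1}^{K} W^{(k)}$. First I would observe that the order-$K$ expansion $\sum_{k=1}^{K} f^{(k)}$ (in the sense of Equation~\eqref{eq:define_dre}) feeds the reconstructed weights $\sum_{k=1}^{K} W^{(k)}$ into a single activation, so that the layer output discrepancy is $\sigma(Wx+b)-\sigma\bigl(\sum_{k=1}^{K} W^{(k)} x + b\bigr)$; assuming $\sigma$ is $1$-Lipschitz (as for ReLU and the usual activations), this is bounded by $\|r^{(K)} x\|$. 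Thus both statements follow once I understand how $r^{(K)}$ evolves with $K$.

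For the convergence claim $f=\sum_{k=1}^{\infty} f^{(k)}$, I would note that the $(K+1)$-th residue is by construction the rounding error incurred when quantizing $r^{(K)}$, so each entry obeys $|r^{(K+1)}_i| \le s_{R^{(K+1)}}/2 = \max_j|r^{(K)}_j|/\bigl(2(2^{b-1}-1)\bigr)$. Writing $M_K=\max_i|r^{(K)}_i|$, this yields the contraction $M_{K+1}\le M_K/\bigl(2(2^{b-1}-1)\bigr)\le M_K/2$ for every $b\ge 2$, so $M_K\to 0$ geometrically. Hence $\sum_{k=1}^{\infty} W^{(k)} = W$ entrywise, and feeding this into the finite composition defining the network, continuity of each layer gives $\sum_{k=1}^{\infty} f^{(k)} = f$.

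For the monotonicity claim, the same contraction $\|r^{(K+1)}\|_\infty \le \tfrac12\|r^{(K)}\|_\infty$ shows the weight residual genuinely shrinks from one order to the next, not merely its bound. I would then bound the output error by $\|r^{(K)} x\| \le \|r^{(K)}\|_{\mathrm{op}}\,\|x\|$, control $\|r^{(K)}\|_{\mathrm{op}}$ by $M_K$ up to a dimension factor, and take expectation over the input distribution to conclude $\mathbb{E}\bigl[\|f-\sum_{k=1}^{K+1} f^{(k)}\|\bigr]\le \mathbb{E}\bigl[\|f-\sum_{k=1}^{K} f^{(k)}\|\bigr]$. The multilayer case follows by propagating the per-layer error through the composition, exactly as in the product-form bound of Equation~\eqref{eq:main_upper_bound}, using $1$-Lipschitzness of the activations so that errors are not amplified between layers.

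The main obstacle is precisely this last point: Lemma~\ref{thm:dre} supplies an \emph{upper} bound on the residual, so naively it only gives monotonicity of the bound rather than of the actual expected norm. The crux is therefore to establish the genuine contraction $\|r^{(K+1)}\|_\infty\le\tfrac12\|r^{(K)}\|_\infty$ from the half-a-step rounding property (the key being that the quantization step of residue $K+1$ is fixed by the range of residue $K$), and then to verify that composing layers and passing through the nonlinearity cannot undo this decrease --- which is guaranteed by the Lipschitz assumption and is exactly what makes the product in Equation~\eqref{eq:main_upper_bound} well behaved.
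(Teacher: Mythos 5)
Your proposal follows essentially the same route as the paper's own (terse) argument: the claim $f=\sum_{k=1}^{\infty}f^{(k)}$ is obtained from the geometric contraction $M_{K+1}\le M_K/\bigl(2(2^{b-1}-1)\bigr)$ that the half-step rounding property gives via Equation~\eqref{eq:exponential_decrease}, and the monotonicity is obtained by observing that each scalar residual error itself (not merely its bound) is non-increasing at every order, which is exactly what the paper means by ``detailing the induction in the previous proof.'' The only caveat, which applies equally to the paper's version, is that entrywise decrease of the weight residuals does not by itself force $\|r^{(K+1)}x\|\le\|r^{(K)}x\|$ for every input $x$ (sign cancellations can reverse the inequality), so the final passage from weight-level contraction to monotonicity of the expected function norm is asserted rather than fully proved in both arguments.
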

The first inequality results from detailing the induction in the previous proof.
Instead of an upper bound on the error over all the scalar values we consider each error and show using the same properties that they go down after each step. $f = \sum_{k=1}^\infty f^{(k)}$ is a direct consequence of equation \ref{eq:exponential_decrease}.

\begin{figure}[!t]
    \centering
    \includegraphics[width = 0.6\linewidth]{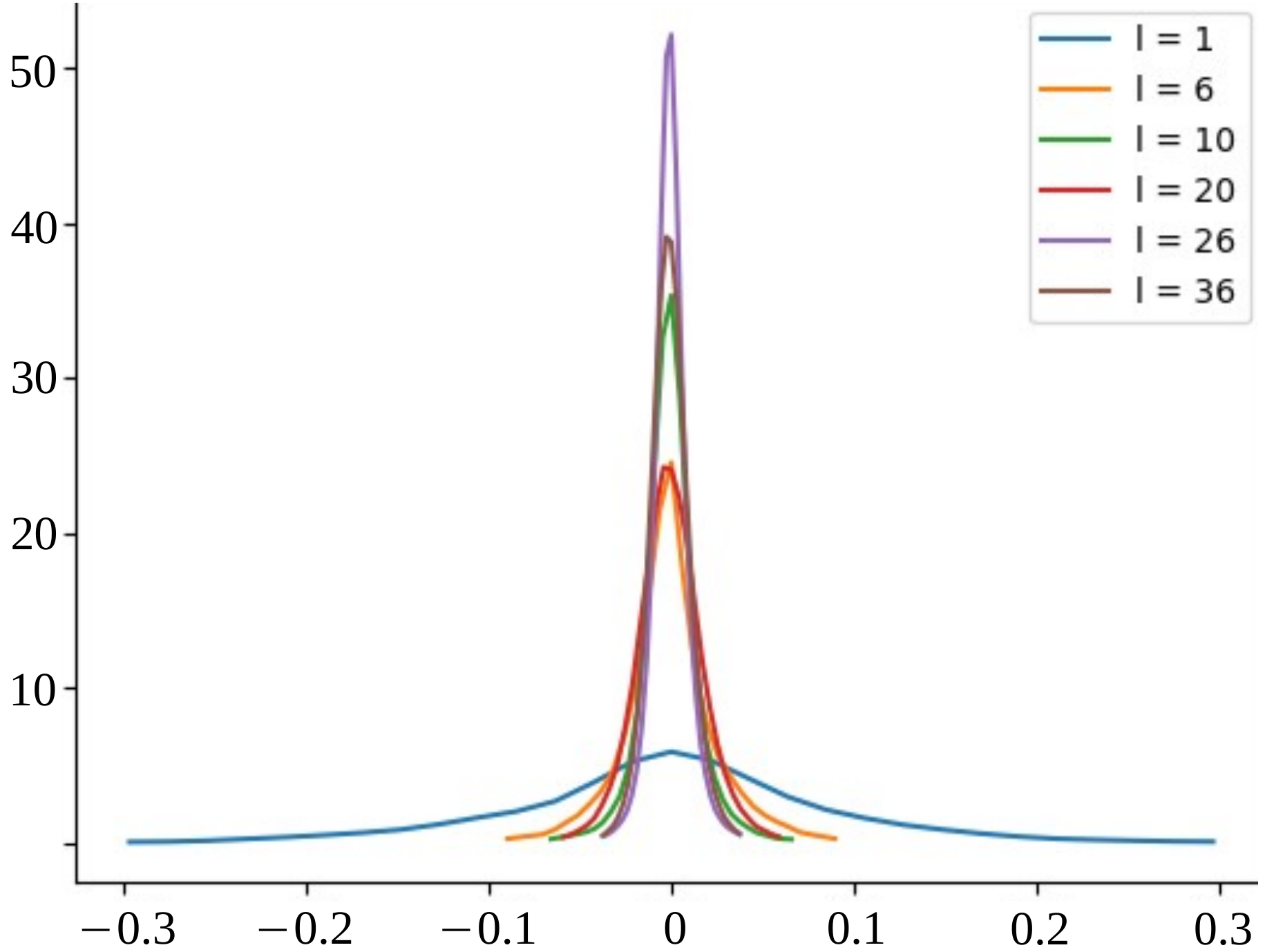}
    \caption{Distribution of the scalar weight values of different layers of a ResNet 50 trained on ImageNet. We observe that every distribution is symmetric around $0$.}
    \label{fig:symmetry_assumption}
\end{figure}

\paragraph{Sparse Expansion}
Let $N^{(k)}_i$ denotes the $L_1$ norm of an output channel $i$ of the $k$-th order residue $R^{(k)}$. The sparse residue is defined as:
\begin{equation}\label{eq:appendix_sparsity_definition}
    \left(R^{(k)}_\gamma\right)_i = (R^{(k)})_i \cdot \mathbbm{1}^{(k)}_\gamma
\end{equation}
where $\cdot$ is the element-wise multiplication, $\mathbbm{1}^{(k)}_{\gamma}=\mathbbm{1}_{\{N^{(k)}_i \geq \tau^{(k)}_{\gamma}\}}$ and $\tau^{(k)}_{\gamma}$ is a threshold defined as the ${\gamma}$ percentile of $N^{(k)}$. In other words, we remove a proportion $\gamma$\ of channels from residue $R^{(k)}$ that are the least important, as indicated by their norm $N^{(k)}$. Note however that these pruned channels can be encoded in subsequent residuals, \textit{i.e.} $R^{(k')}$, with $k' >k$. The result from Lemma \ref{thm:dre} becomes:
\begin{lemma}\label{thm:slim}
Let $f$ be a layer of real-valued weights $W$ with a symmetric distribution.
Then we have
\begin{equation}\label{eq:slim}
\begin{aligned}
    \left| w - \left(\sum_{k = 1}^{K-1} w^{(k)} + Q^{-1}\left(R^{(K)}_\gamma\right) \right) \right| \\ \leq  \frac{\left\| N^{(K)} \cdot \mathbbm{1}^{(K)}_{\gamma} \right\|_\infty{\left(s_{R^{(k)}}\right)}_i}{\left(2^{b-1}-1\right)^{K}2}
\end{aligned}
\end{equation}
where $\|\|_\infty$ is the infinite norm operator with the convention that $\|0\|_\infty = 1$ and ${\left(s_{R^{(k)}}\right)}_i$ denotes the row-wise rescaling factor at order $K$ corresponding to $w$.
\end{lemma}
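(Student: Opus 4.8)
The plan is to adapt the induction behind Lemma~\ref{thm:dre}, treating the sparsification at order $K$ as a case distinction on whether the output channel containing a given weight $w$ is retained or pruned. Concretely, I would fix a scalar weight $w$ in output channel $i$ and write $r = w - \sum_{k=1}^{K-1} w^{(k)}$ for the residual accumulated through the first $K-1$ dense orders, so that $w^{(K)} = Q^{-1}(R^{(K)})$ quantizes $r$ with the defining rounding property $|r - w^{(K)}| \leq (s_{R^{(K)}})_i / 2$ that already underpins Lemma~\ref{thm:dre}.

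For a retained channel ($\mathbbm{1}^{(K)}_\gamma = 1$) the sparse residue coincides with the dense one, $Q^{-1}(R^{(K)}_\gamma) = w^{(K)}$, so the quantity to bound is exactly $\lvert w - \sum_{k=1}^{K} w^{(k)}\rvert$ and Lemma~\ref{thm:dre} applies verbatim. For a pruned channel ($\mathbbm{1}^{(K)}_\gamma = 0$) the order-$K$ contribution vanishes and the error reduces to $\lvert r\rvert$; here I would split $r = w^{(K)} + (r - w^{(K)})$, bounding the rounding term by $(s_{R^{(K)}})_i/2$ as above and the dequantized term by $\lvert w^{(K)}\rvert = (s_{R^{(K)}})_i \lvert R^{(K)}\rvert \leq (s_{R^{(K)}})_i N^{(K)}_i$, since the $L_1$ channel norm $N^{(K)}_i$ dominates any single integer entry of channel $i$. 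The key structural step is that pruning removes the smallest-norm channels, so for a pruned $i$ we have $N^{(K)}_i \leq \tau^{(K)}_\gamma \leq \lVert N^{(K)} \cdot \mathbbm{1}^{(K)}_\gamma\rVert_\infty$; this is precisely what lets the pruned-channel bound be expressed through the infinite norm of the retained part, and the convention $\lVert 0\rVert_\infty = 1$ makes the statement collapse back to Lemma~\ref{thm:dre} when nothing is pruned.

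The remaining work is to fold the scaling factor into the advertised exponential rate: I would substitute the telescoping identity $s_{R^{(K)}} = \max\{w - \sum_{k=1}^{K-1} w^{(k)}\}/(2^{b-1}-1)$ recursively, exactly as in the closing lines of the proof of Lemma~\ref{thm:dre}, to convert the $(s_{R^{(K)}})_i$ appearing above into the factor $(2^{b-1}-1)^{-K}(s_{R^{(k)}})_i$ on the right-hand side of \eqref{eq:slim}. I expect the main obstacle to be the constant bookkeeping that merges the two cases into a single clean bound: the retained-channel estimate must be shown to be dominated by the same expression using $\lVert N^{(K)} \cdot \mathbbm{1}^{(K)}_\gamma\rVert_\infty \geq 1$, and the extra factor $(2^{b-1}-1)^{-1}$ relative to Lemma~\ref{thm:dre} must be matched against the norm term so that neither case overshoots the stated right-hand side.
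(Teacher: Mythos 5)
Your proposal follows essentially the same route as the paper's proof: a case split on whether the channel containing $w$ is retained or pruned at order $K$, with the retained case falling back verbatim on Lemma \ref{thm:dre} and the pruned case bounded through the channel norm $N^{(K)}_i$, its domination by the threshold $\tau^{(K)}_{\gamma}$ and hence by $\left\| N^{(K)} \cdot \mathbbm{1}^{(K)}_{\gamma} \right\|_\infty$, followed by the recursive substitution of the scaling factors. Your decomposition $r = w^{(K)} + \left(r - w^{(K)}\right)$ in the pruned case actually makes explicit the step the paper only asserts (``bounded by the expansion of lower order and the maximum of the norm $N^{(K)}$''), and the constant bookkeeping you flag as the remaining obstacle is precisely the part the paper's own proof also leaves implicit.
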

\begin{proof}
From equation \ref{eq:exponential_decrease}, we have:
\begin{equation}
    \left| w - \left(\sum_{k = 1}^{K-1} w^{(k)} + Q^{-1}\left(R^{(K)}_{1}\right) \right) \right| \leq  \frac{{\left(s_{R^{(K)}}\right)}_i}{2}\left(\frac{1}{2^{b-1}-1}\right)^{K}
\end{equation}
which corresponds to the case where $\gamma^l = 1$. 
If $\gamma^l < 1$, we have two possibilities for $w$.
First, the coordinate in $N^{(K)}$ associated to is greater than $\tau^{(K)}_{\gamma^l}$ then we fall in the case where $R^{(K)}_\gamma = R^{(K)}$ and as such we have the result from equation \ref{eq:exponential_decrease} which is stronger than equation \ref{eq:slim}.
Second, the coordinate in $N^{(K)}$ associated to is lower than $\tau^{(K)}_{\gamma^l}$.
Then we have that the difference between the baseline weight $w$ and the slim expansion is bounded by the expansion of lower order and the maximum of the norm $N^{(K)}$ which leads to the result in equation \ref{eq:slim}.
\end{proof}
\paragraph{Empirical validation:} In lemma \ref{thm:dre} and \ref{thm:slim} we stated the exponential convergence to $0$ of the approximation error on the weight values.
In order to empirically confirm this theoretical result, we quantize a ResNet 50 trained on ImageNet in ternary values for different orders $K$.
As can be seen in Figure \ref{fig:norms_errors}, the average error per layer, exponentially converges to $0$ which matches our expectations.
The figure also confirms the empirical result on the strategies for $\gamma$. The higher errors are located on the last layers, thus these layers require more attention.
\begin{figure}[!t]
    \centering
    \includegraphics[width = 0.5\linewidth]{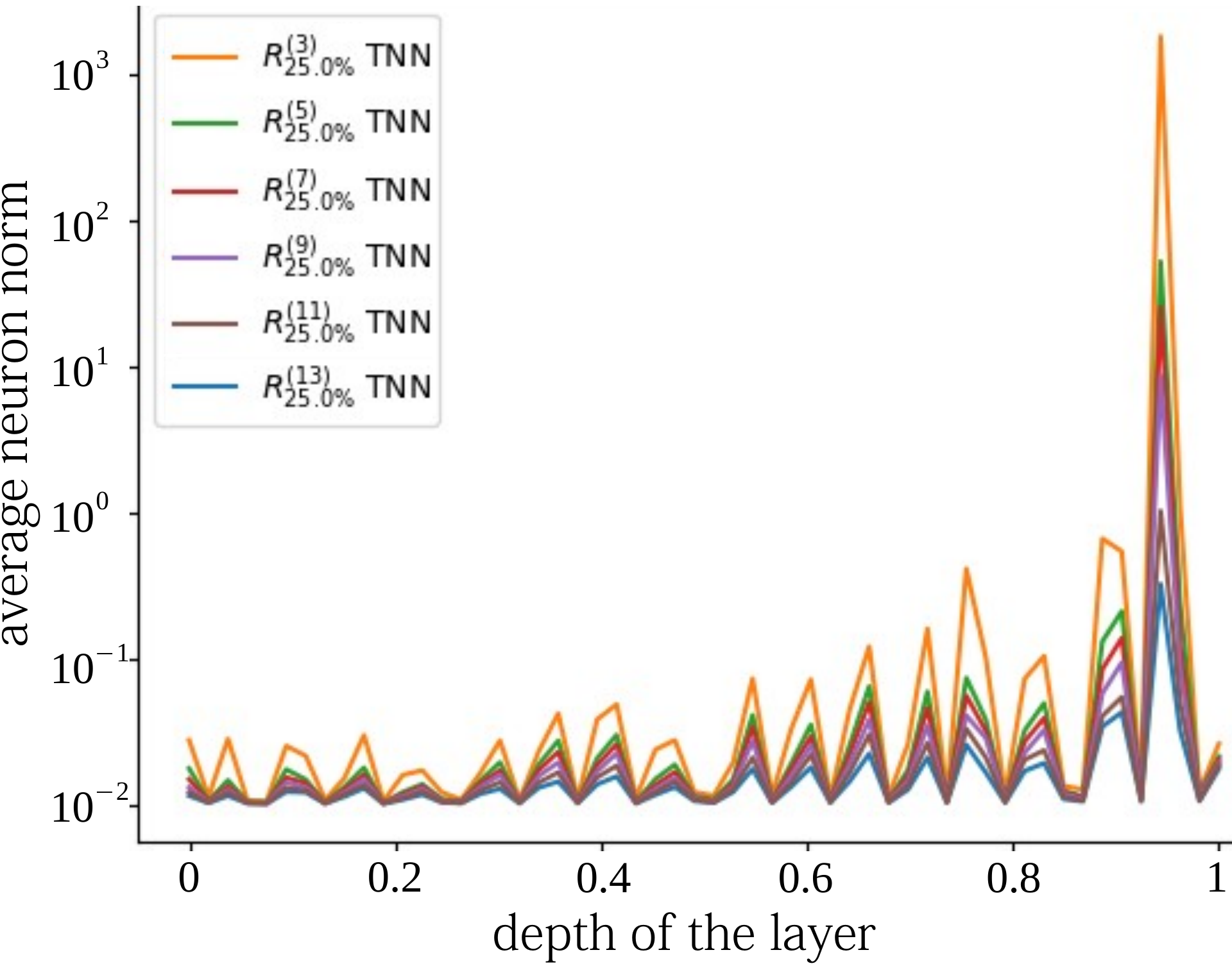}
    \caption{Comparison of the average norm of the quantization error for each layers of a ResNet 50 trained on ImageNet. We observe the exponential convergence stated in lemma \ref{thm:dre} and \ref{thm:slim}.}
    \label{fig:norms_errors}
\end{figure}

\section{Upper Bound Error}\label{appendix:upperbound}
\begin{theorem}\label{thm:dre_upperbound}
Let $F$ be a trained $L$ layers sequential DNN. We note $\sigma_l$ the largest singular value of $W_l - \sum_k R^{(k)}$, \textit{i.e.} the spectral norm of $W_l- \sum_k R^{(k)}$. Then we have
\begin{equation}
\begin{aligned}
    \max_{\|X \| = 1} \|F(X) - F(X)^{(K)}\|_\infty \leq U_{\text{res}}\\
    U_{\text{res}} = \prod_{l=1}^L \left(\sum_{i=1}^l \sigma_i u_i^{(K)} + 1 \right) - 1
\end{aligned}
\end{equation}
where $u_l^{(K)} = \left(\frac{1}{2^{b-1}-1}\right)^{K-1} \frac{{\left(s_{R^{(K)}}\right)}_i}{2}$ from equation \ref{eq:exponential_decrease}.
\end{theorem}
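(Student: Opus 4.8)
The plan is to control the output discrepancy by propagating the per-layer weight error through the network, layer by layer. First I would fix an input $X$ with $\|X\| = 1$ and denote by $X_l$ and $X_l^{(K)}$ the activations produced after the $l$-th layer by $F$ and by its expanded counterpart $F^{(K)}$ from \eqref{eq:define_dre}, writing $\tilde W_l = \sum_{k=1}^{K} R^{(k)}_l$ for the effective quantized weight and $E_l = W_l - \tilde W_l$ for the residual error matrix of layer $l$. Lemma~\ref{thm:dre} supplies the entrywise bound $|(E_l)_{ij}| \le u_l^{(K)}$ coming from \eqref{eq:exponential_decrease}, while $\sigma_l = \|E_l\|_2$ is its spectral norm; these are the two quantities that drive the whole argument. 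The target is a bound on the discrepancy $\Delta_l := X_l - X_l^{(K)}$ that, evaluated at $l = L$, in the $\ell_\infty$ norm, and after taking the supremum over the unit sphere, reproduces $U_{\text{res}}$.

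The core step is a single-layer recursion. Since both networks share the biases and (as is standard for the ReLU-type activations considered here) the activation $\sigma$ is $1$-Lipschitz, I would add and subtract the hybrid term $\sigma(\tilde W_l X_{l-1} + b_l)$ and use nonexpansiveness of $\sigma$ to split
\begin{equation}
\|\Delta_l\| \le \big\|(W_l - \tilde W_l) X_{l-1}\big\| + \big\|\tilde W_l\, \Delta_{l-1}\big\| \le \sigma_l \|X_{l-1}\| + \|\tilde W_l\|_2\, \|\Delta_{l-1}\|.
\end{equation}
The first summand is the fresh error injected at layer $l$: combining the spectral control $\sigma_l$ with the entrywise bound $u_l^{(K)}$ of Lemma~\ref{thm:dre} and the norm of the clean activation $X_{l-1}$ produces the per-layer contribution $\sigma_l u_l^{(K)}$. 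The second summand is the amplification, by the quantized layer, of all error accumulated so far; this is what makes earlier-layer errors reappear in the later factors.

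To obtain the product-of-sums rather than a raw sum of telescoping products, I would recast the recursion multiplicatively: after normalizing the per-layer gains (using $\|X\| = 1$ and bounding $\|\tilde W_l\|_2$ by the full-precision gain plus $\sigma_l$), each layer both rescales the incoming error and adds its own, so that the cumulative error through layer $l$ is multiplied by a factor $\big(1 + \sum_{i=1}^{l} \sigma_i u_i^{(K)}\big)$. Telescoping these factors from the base case $l = 1$ up to $l = L$ and subtracting the leading $1$ yields exactly $\prod_{l=1}^{L}\big(\sum_{i=1}^{l} \sigma_i u_i^{(K)} + 1\big) - 1$. Passing from the Euclidean to the $\ell_\infty$ norm at the output, and then taking the supremum over $\|X\| = 1$, gives $U_{\text{res}}$ and closes the argument.

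The hard part will be the bookkeeping that collapses the unrolled recursion into this clean nested form: I expect the delicate point to be justifying that the error accumulated in earlier layers is re-amplified so as to contribute the \emph{full} partial sum $\sum_{i \le l} \sigma_i u_i^{(K)}$ inside each factor, which requires carefully tracking the interplay between the entrywise bound of Lemma~\ref{thm:dre} and the spectral gains $\|\tilde W_l\|_2$ of the quantized layers under the chosen normalization. The two structural facts that make the recursion valid are the $1$-Lipschitz (e.g.\ ReLU) assumption on the activations and the exact cancellation of the shared biases in the add-and-subtract step; relaxing the activation assumption would merely rescale each factor by the corresponding Lipschitz constant.
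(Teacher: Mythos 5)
Your overall strategy---propagating $\Delta_l = X_l - X_l^{(K)}$ layer by layer via an add-and-subtract step, using the $1$-Lipschitz activation to split the fresh injected error from the amplification of the accumulated one---is exactly the route the paper takes; the paper only writes out the case $L=2$, $F: X \mapsto B\sigma(AX)$, concluding $\| F(X) - F^{(K)}(X) \|_2 \leq \sigma_B u_B^{(K)} + \sigma_A u_A^{(K)} + \sigma_B u_B^{(K)} \sigma_A u_A^{(K)}$ and leaving the induction implicit. The genuine gap in your proposal is the step that manufactures the per-layer term $\sigma_l u_l^{(K)}$. You take $\sigma_l = \|E_l\|_2$ with $E_l = W_l - \sum_k R^{(k)}$ (which is what the theorem statement says), but then the injected error in your own recursion is $\|E_l X_{l-1}\| \leq \sigma_l \|X_{l-1}\|$ and there is no additional factor of $u_l^{(K)}$ to be extracted; ``combining the spectral control with the entrywise bound'' would be double counting, and since $u_l^{(K)}$ is small it would not even be a valid weakening of the bound. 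The paper's proof reaches $\sigma_l u_l^{(K)}$ only by silently reinterpreting $\sigma_l$ as the spectral norm of the \emph{full-precision} weight and asserting $\sigma_{A - A^{(K)}} \leq \sigma_A u_A^{(K)}$, i.e.\ that the error matrix's spectral norm is controlled by the original spectral norm times the entrywise bound of Lemma~\ref{thm:dre}. You must commit to one of these two readings to complete the argument, and the second one itself requires justification (an entrywise bound $u$ on an $m\times n$ matrix only yields $\|E_l\|_2 \leq \sqrt{mn}\,u$ in general).

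The two points you defer as ``bookkeeping'' are also not routine. First, $\|X_{l-1}\|$ is not bounded by $1$: it grows like $\prod_{i<l}\|W_i\|_2$, and neither your factors $\bigl(1+\sum_{i\leq l}\sigma_i u_i^{(K)}\bigr)$ nor the theorem's formula visibly contain that product, so the promised ``normalization of per-layer gains'' has to be made precise for the recursion to close. Second, the cumulative inner sums $\sum_{i\leq l}\sigma_i u_i^{(K)}$ do not fall out of the recursion you wrote: unrolling it for $L=2$ gives the paper's $\prod_l\bigl(1+\sigma_l u_l^{(K)}\bigr)-1$, which is smaller than the stated $U_{\text{res}}$, so you would need a separate (easy, but currently absent) domination argument to land on the theorem's exact expression. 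These are the same corners the paper's two-line proof cuts, so your plan is faithful in spirit, but as written it does not constitute a proof.
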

\begin{proof}
Let's consider $L=2$, and $F : X \mapsto B \sigma (Ax)$. For any $X$ in the domain of $F$ such that $\| X \| = 1$, we have
\begin{equation}
    \| F(X) \|_2 \leq \sigma_B + \sigma_A + \sigma_B \sigma_A
\end{equation}
where $\sigma_B$ is the largest singular value of $B$ and $\sigma_A$ is the largest singular value of $A$. Following the definition of the $2$-norm and $\infty$-norm, we get that 
\begin{equation}
    \sigma_{A - A^{(K)}} \leq \sigma_A u_A^{(K)}
\end{equation}
where $\sigma_{A - A^{(K)}}$ is the largest singular value of the residual error of order $K$, $A - A^{(K)}$ and $u_A^{(K)}$ is derived from equation \ref{eq:exponential_decrease}. Consequently, we get 
\begin{equation}
    \| F(X) - F^{(K)}(X) \|_2 \leq \sigma_B u_B^{(K)} + \sigma_A u_A^{(K)} + \sigma_B u_B^{(K)} \sigma_A u_A^{(K)}
\end{equation}
\end{proof}

\paragraph{Sparse Expansion}
\begin{theorem}\label{thm:slim_upperbound}
Let $F$ be a trained $L$ layers sequential DNN. We note $\sigma_l$ the largest singular value of $W_l- \sum_k R^{(k)}$, \textit{i.e.} the spectral norm of $W_l- \sum_k R^{(k)}$. Then we have
\begin{equation}
\begin{aligned}
    \max_{\|X \| = 1} \|F(X) - F(X)^{(K)}\|_\infty \leq U_{\text{sparse}}\\
    U_{\text{sparse}}= \prod_{l=1}^L \left(\sum_{i=1}^l \sigma_i u_i^{(K)} + 1 \right) - 1
\end{aligned}
\end{equation}
where $u_l^{(K)} = \frac{\left\| N^{(K)} \cdot \mathbbm{1}^{(K)}_{\gamma} \right\|_\infty{\left(s_{R^{(k)}}\right)}_i}{\left(2^{b-1}-1\right)^{K}2}$ from equation \ref{eq:slim}.
\end{theorem}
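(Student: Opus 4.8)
The plan is to follow the proof of Theorem~\ref{thm:dre_upperbound} essentially verbatim, since the two statements differ only in the per-layer error term $u_l^{(K)}$: the dense bound of Lemma~\ref{thm:dre} (equation~\ref{eq:exponential_decrease}) is replaced by the sparse bound of Lemma~\ref{thm:slim} (equation~\ref{eq:slim}). The structural argument that propagates a single-layer weight perturbation through the composition of $L$ layers is identical in both cases, so the whole task reduces to two steps: (i) turning the element-wise sparse bound of Lemma~\ref{thm:slim} into a spectral-norm bound on the residual error matrix, and (ii) replaying the telescoping, submultiplicative argument of Theorem~\ref{thm:dre_upperbound} with this new per-layer quantity.

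First I would establish the single-layer estimate. Writing $W_l^{(K)} = \sum_{k=1}^{K-1} w^{(k)} + Q^{-1}(R^{(K)}_\gamma)$ for the order-$K$ sparse reconstruction of layer $l$, Lemma~\ref{thm:slim} controls every coordinate of $W_l - W_l^{(K)}$ in absolute value by $u_l^{(K)} = \frac{\|N^{(K)}\cdot\mathbbm{1}^{(K)}_\gamma\|_\infty (s_{R^{(k)}})_i}{(2^{b-1}-1)^K 2}$. I would then pass from this entrywise control to the spectral norm of the error, exactly as in the proof of Theorem~\ref{thm:dre_upperbound}, obtaining
\begin{equation}
    \sigma_{W_l - W_l^{(K)}} \leq \sigma_l\, u_l^{(K)},
\end{equation}
where $\sigma_l$ is the spectral norm of $W_l - \sum_k R^{(k)}$. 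The sparse-specific factor $\|N^{(K)}\cdot\mathbbm{1}^{(K)}_\gamma\|_\infty$, with the convention $\|0\|_\infty = 1$, is precisely what accounts for the channels pruned from the residue $R^{(K)}$, whose error is only suppressed at higher orders.

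Next I would propagate this estimate through the network. Writing $F = \sigma_L \circ W_L \circ \cdots \circ \sigma_1 \circ W_1$ and assuming the activations are $1$-Lipschitz (as for ReLU and the usual piecewise-affine nonlinearities), I would bound $\|F(X) - F^{(K)}(X)\|_\infty$ by induction on $L$. The base case $L=2$ is the computation already displayed in Theorem~\ref{thm:dre_upperbound}, yielding $\sigma_B u_B^{(K)} + \sigma_A u_A^{(K)} + \sigma_B u_B^{(K)} \sigma_A u_A^{(K)}$; the inductive step inserts one further factor $(\sum_{i=1}^l \sigma_i u_i^{(K)} + 1)$ into the product, since each additional layer contributes its own error $\sigma_l u_l^{(K)}$ while amplifying the already-accumulated error by at most its own gain. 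Telescoping these factors and removing the spurious constant term gives exactly $U_{\text{sparse}} = \prod_{l=1}^L (\sum_{i=1}^l \sigma_i u_i^{(K)} + 1) - 1$.

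The main obstacle I anticipate is step (i): carefully justifying the passage from the entrywise bound of Lemma~\ref{thm:slim} to the operator-norm bound $\sigma_{W_l - W_l^{(K)}} \leq \sigma_l u_l^{(K)}$, because a naive use of $\|M\|_2 \leq \sqrt{mn}\,\max_{ij}|M_{ij}|$ would introduce unwanted dimension factors. The cleanest route is to keep the estimate row-wise, since the rescaling factor $(s_{R^{(k)}})_i$ is itself row-indexed, and to exploit the symmetry assumption on the weight distribution, mirroring the way Theorem~\ref{thm:dre_upperbound} absorbs these constants into $\sigma_l$. Once the single-layer spectral estimate is in place, the remaining steps, namely the Lipschitz propagation and the telescoping product, are routine.
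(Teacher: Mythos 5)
Your proposal follows the paper's own route exactly: the paper proves Theorem~\ref{thm:slim_upperbound} in one line by declaring it ``directly derived from Theorem~\ref{thm:dre_upperbound}'', with the sole change being the substitution of the per-layer error term $u_l^{(K)}$ from Lemma~\ref{thm:slim} in place of the one from Lemma~\ref{thm:dre}, which is precisely your plan. Your write-up is in fact more explicit than the paper's (you spell out the induction over layers and correctly flag the entrywise-to-spectral-norm passage as the delicate step, a step the paper also glosses over in the dense case), but it is the same argument.
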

This results is directly derived from Theorem \ref{thm:dre_upperbound}.
This result can be extended to more sophisticated architectures. To do so we simply need to address specific attributes such as skip connections, concatenations and other activation functions.
\paragraph{Skip Connections and Concatenations}
In the case of skip connections, the graph is split from a starting layer $l_1$ and split in at least two branches that are added after layer $l_2$ and $l_3$. Assuming we can compute the upper bound for each branch (sub-networks) we simply add these sub-errors. In the case of U-nets, where skip connections contain skip connections, we simply perform this process recursively.

A similar approach can be applied to address concatenations. However in this case we keep the largest value instead of adding them.

\paragraph{Self-Attention and Cross-Attention blocks}
In order to generalize to attention modules, we need to generalize our formula to a product of layers. Let's consider the weight tensors of the keys $W_{\text{keys}}$ and queries $W_{\text{queries}}$. Then the attention scores are computed as follows
\begin{equation}\label{eq:attention}
    \text{Att}(X) = {(W_{\text{keys}} \times X)}^T \times {(W_{\text{queries}} \times X)}
\end{equation}
We want to bound the quantization error on the attention mechanism. However, the process involves the magnitude of the inputs $X$ as we highlight
\begin{equation}
    \text{Error}_\text{Att}(X)= \left \| {(W_{\text{keys}} \times X)}^T \!\times\! {(W_{\text{queries}} \times X)} - {\left(\!\!\!\left(\sum_k R_{\text{keys}}^{(k)}\right) \times X\right)}^T \!\!\!\!\times\! \left(\!\!\!\left(\sum_k R_{\text{queries}}^{(k)}\right) \times X\right) \right\|
\end{equation}
If we note $\sigma_{\text{k}}$ and $\sigma_{\text{q}}$ the spectral norms of the residual errors of the keys and queries respectively, then we can simplify the previous formulation
\begin{equation}
    \text{Error}_\text{Att}(X) = \left \| {(\sigma_{\text{k}} \times X)}^T {(W_{\text{queries}} \times X)} + {(W_{\text{keys}} \times X)}^T {(\sigma_{\text{q}} \times X)} +{(\sigma_{\text{k}} \times X)}^T {(\sigma_{\text{q}} \times X)} \right\|
\end{equation}
In order to measure this influence on the softmax in the worst case scenario, we can simply compare the $\sigma_{\text{k}}$ and $\sigma_{\text{q}}$ to the smallest singular values of $W_{\text{queries}}$ and $W_{\text{keys}}$. If we note $\alpha_k$ and $\alpha_q$ the largest singular values of $W_{\text{keys}}$ and $W_{\text{queries}}$ respectively, then we get
\begin{equation}
    \text{Error}_\text{Att}(X)\Big|_{\|X\|\leq 1} \leq \sigma_{\text{k}}\alpha_q + \sigma_{\text{q}}\alpha_k + \sigma_{\text{k}}\sigma_{\text{q}}
\end{equation}
If we note $\epsilon = \sigma_{\text{k}}\alpha_q + \sigma_{\text{q}}\alpha_k + \sigma_{\text{k}}\sigma_{\text{q}}$ this upper bound, then the error on the softmax scores becomes
\begin{equation}
    \text{Error}_\text{Softmax}(X)\Big|_{\|X\|\leq 1} \leq 1-e^{-2\epsilon}
\end{equation}

\paragraph{Other Activation Functions}
Although ReLU activations are predominant in modern DNNs, there are still many other widely used activation functions such as SiLU, GeLU or even sigmoid. SiLU and GeLU are bounded by the ReLU on the positive side which is where the highest errors occur. Consequently, the upperbound is invariant to GeLU and SiLU activation functions (although under more assumptions on the support, the upper bound could be tightened for ReLU and should be modified for GeLU and SiLU). On the other hand, for sigmoid activations or similar activations (e.g. tanh), the upper bound becomes an upper bound on $X$ in the domain of $F$ instead of $X$ on the unit circle.

\section{Sparse Expansion Outperforms Standard Expansion}\label{appendix:pyramid}
\begin{lemma}\label{thm:pyramid}
Let $f$ be a layer of real-valued weights $W$ with a symmetric distribution. Then, for $K' < K$ two integers, we have:
\begin{equation}
    \text{Err}\left( R^{(1)} + \sum_{k=2}^{K'} R^{(k)}_{\gamma_1} \right) \geq \text{Err}\left( R^{(1)} + \sum_{k=2}^{K} R^{(k)}_{\gamma_2} \right)
\end{equation}
where $\text{Err}$ is the quantization error (\textit{i.e.} the absolute difference between the quantized and original weights, as in Equation \ref{eq:exponential_decrease}) and $K' \times \gamma_1 = K \times \gamma_2 = \beta$.
\end{lemma}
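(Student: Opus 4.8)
The plan is to reduce the inequality to a per-channel comparison and then exploit the geometric contraction of Lemma \ref{thm:dre} together with the sparse bound of Lemma \ref{thm:slim}. Write $q = \frac{1}{2^{b-1}-1} < 1$ for the factor by which a single expansion step shrinks a residual. The governing intuition is that the budget $\beta$ is a supply of expansion tokens that the greedy support $\mathbbm{1}^{(k)}_{\gamma}$ spends on the channels carrying the largest current residual norm $N^{(k)}$: each token a channel receives multiplies its residual by $q$, whereas a skipped channel keeps its residual but rises in the ranking and becomes a candidate at later orders. Thus $\text{Err}$ is controlled by how deeply the tokens can be stacked on the worst channels, and the claim is that, at equal total budget, a deep-and-sparse schedule ($K$ orders, density $\gamma_2$) stacks tokens more effectively than a shallow-and-dense one ($K'$ orders, density $\gamma_1$).

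First I would make the bookkeeping explicit: in each scheme the order-$1$ residue is dense and treats every channel, while the $K-1$ (resp. $K'-1$) higher-order residues each re-expand a $\gamma$-fraction of channels; the constraint $K'\gamma_1 = K\gamma_2 = \beta$ states that both schedules inject the same total expansion mass. I would then bound, via Lemma \ref{thm:slim}, the residual of a channel selected at $d$ distinct orders by $C q^{d}$, with $C$ absorbing the scale $s_{R}$ and the norm factor $\|N^{(\cdot)}\cdot \mathbbm{1}\|_\infty$, so that any monotone aggregate $\text{Err}$ (the max, or the $\ell_1$/expected error) is a nonincreasing function of the token multiplicities $(d_i)_i$.

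The core step is to show that the greedy schedule with more orders produces a multiplicity profile that dominates the one with fewer orders. I would argue this by an exchange/induction argument over the number of orders: passing from $(K', \gamma_1)$ to $(K'+1, \tfrac{K'}{K'+1}\gamma_1)$ splits one dense round into two sparser rounds of equal total mass (indeed $(K'+1)\tfrac{K'}{K'+1}\gamma_1 = K'\gamma_1$), and because the greedy rule always reinvests in the channel with the largest residual, the extra round can only deepen the stack on that channel, turning a residual $Cq^{d}$ into $Cq^{d+1}$, and hence weakly lowers every monotone aggregate. Telescoping these single-step refinements up to $K$ yields the stated inequality; the disparity in channel magnitudes is precisely what makes re-expansion of the worst channels worthwhile and the inequality nontrivial.

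The hard part will be controlling the interaction between the two greedy selection patterns, since the supports $\mathbbm{1}^{(k)}_{\gamma_1}$ and $\mathbbm{1}^{(k)}_{\gamma_2}$ differ at every order and the sparse expansion is explicitly allowed to reconsider previously pruned channels. I would handle this by leaning on the fact that the support is the set of top channels by $N^{(k)}$, which makes the greedy step myopically optimal for reducing the leading residual; a short majorization argument then shows the deeper schedule never falls behind on the largest residuals. It remains only to dispose of the harmless boundary conventions, namely the $\|0\|_\infty = 1$ convention of Lemma \ref{thm:slim} and the off-by-one between ``$K$ orders'' and ``$K-1$ nontrivial orders'' in the budget, which affect constants but not the monotonicity that drives the proof.
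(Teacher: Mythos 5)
Your overall strategy --- model the budget as expansion tokens spent greedily on the channels with the largest residual norms, and argue that the deeper, sparser schedule allocates these tokens at least as effectively as the shallower, denser one --- is the same idea that drives the paper's proof. The paper makes it concrete with a case analysis on a two-output-channel layer: whatever channel the shallower schedule would refine at a given order is one of the options available to the deeper schedule, and since the support $\mathbbm{1}^{(k)}_\gamma$ is chosen greedily on the current residual norms $N^{(k)}$, the deeper schedule either reproduces the shallower allocation exactly (equality) or replaces one of its steps by a more profitable re-expansion (strict improvement). So your framing is aligned with the intended argument, and like the paper's own sketch it stops short of a fully general proof.

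The step I would push back on is the telescoping exchange from $(K',\gamma_1)$ to $\left(K'+1, \tfrac{K'}{K'+1}\gamma_1\right)$. You describe this as splitting one dense round into two sparser rounds, but that is not what happens: the density of \emph{every} round drops from $\gamma_1$ to $\tfrac{K'}{K'+1}\gamma_1$, so the supports $\mathbbm{1}^{(k)}_\gamma$ change at every order, channels that were expanded at order $k$ before may no longer be, and the norms $N^{(k)}$ that drive the greedy selection at subsequent orders are different objects in the two schemes. Consequently the claim that the extra round can only deepen the stack on the worst channel does not follow, and the ``short majorization argument'' you defer to is precisely the missing content. The paper's route avoids this by comparing reachable allocations rather than perturbing $\gamma$: the deeper schedule's set of admissible allocations contains one that replicates the shallower schedule's outcome, and greedy selection does at least as well as any admissible choice. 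Two further points to make explicit if you flesh this out: (i) the per-channel bound $Cq^{d}$ does not hold with a single constant $C$, since the row-wise scale $s_{R^{(k)}}$ and the factor $\left\| N^{(K)} \cdot \mathbbm{1}^{(K)}_{\gamma} \right\|_\infty$ of Lemma \ref{thm:slim} depend on which channels were selected at each order, hence on the schedule itself; and (ii) the greedy rule ranks channels by the $L_1$ norm $N^{(k)}$ while $\text{Err}$ is an element-wise quantity as in Equation \ref{eq:exponential_decrease}, so greedy is not literally myopically optimal for the aggregate being bounded --- a gap shared with the paper's proof, but one your majorization argument would lean on more heavily.
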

\begin{proof}
Let's assume the layers outputs two channels.
Then, we have $\gamma_1 = 1$ and $\gamma_2 = 0.5$.
We simply need to prove the result for $k_1 = 2$ and $k_2 =1$ as the result will extend naturally from this case.
The idea of the proof consists in showing that using lower $\beta$ values enables more possibilities of expansions which may lead to better performance.
Let's note $(W)_1$ and $(W)_2$ the weights corresponding to the computation of the first and second output channels respectively.
Using $\gamma_1 = 1$, the second order expansion correspond to either quantizing $(W)_1$ or $(W)_2$. 
Assume $(W)_1$ is chosen for $R^{(2)}_{\gamma_1}$.
Then, $R^{(3)}_{\gamma_1}$ will either quantize the error from $(W)_2$ or further quantizes the error from $R^{(2)}_{\gamma_1}$.
In the first case we end up with $R^{(1)} + \sum_{i=2}^{k_1} R^{(i)}_{\gamma_1} = R^{(1)} + \sum_{n=2}^{k_2} R^{(i)}_{\gamma_2}$.
Otherwise, $\text{Err}\left( R^{(1)} + \sum_{i=2}^{k_1} R^{(i)}_{\gamma_1} \right) > \text{Err}\left( R^{(1)} + \sum_{i=2}^{k_2} R^{(i)}_{\gamma_2} \right)$.
\end{proof}

\section{Implementation Details and Datasets}\label{appendix:implem}\addtocontents{toc}{\newline Implementation Details and Datasets\dotfill\thepage}
We validate the proposed method on three challenging computer vision tasks which are commonly used for comparison of quantization methods.
First, we evaluate on ImageNet \citep{imagenet_cvpr09} ($\approx 1.2$M images train/50k test) classification.
Second, we report results on object detection on Pascal VOC 2012 \citep{pascal-voc-2012} ($\approx$ 17k images in the test set). Third, we benchmark on image segmentation on CityScapes dataset \citep{cordts2016cityscapes} (500 validation images). Our NLP results were obtained on the transfer learning task GLUE \citep{wang-etal-2018-glue}. We also evaluate the OPT-13B \cite{zhang2022opt} LLM on the standard common sense reasoning datasets: BoolQ \cite{clark2019boolq}, PIQA \cite{bisk2020piqa}, HellaSwag \cite{zellers2019hellaswag}, WinoGrande \cite{sakaguchi2021winogrande}, ARC easy and challenge \cite{clark2018think} and OpenBookQA \cite{mihaylov2018can}.
In our experiments we used MobileNets \citep{sandler2018MobileNetV2} and ResNets \citep{he2016deep} on ImageNet. For Pascal VOC object detection we employed an SSD \citep{liu2016ssd} architecture with MobileNet backbone.
On CityScapes we used DeepLab V3+ \citep{chen2018encoder} with MobileNet backbone.
We also test our method on VGG 16 \citep{simonyan2014very} and transformers such as BERT model \citep{devlin2018bert} as well as large language models such as OPT-13B \cite{zhang2022opt}.

In our experiments, the inputs and activations are quantized using the same method as \cite{nagel2019data}. We count the bit-wise operations as follows: let $W$ be the real-valued weights of a $d\times d$ convolutional layer on input feature maps of shape $D\times D\times n_i$ and $n_o$ outputs and stride $s$.
Then the convolutional product requires $d^2\frac{D^2}{s^2} n_i n_o$ floating point multiplications.
The quantized layer requires two rescaling operations (for the quantization of the inputs and the $Q^{-1}$ operation) and an int-$b$ convolution, i.e. $n_i D^2 + \frac{D^2}{s^2} n_o$ floating point multiplications and $d^2\frac{D^2}{s^2} n_i n_o$ int-$b$ multiplications.
Note that the number of additions remains unchanged.
According to \cite{klarreich2019multiplication} the lowest complexity for $b$-digits scalar multiplication is $o(b\log(b))$ bit operations.
This is theoretically achieved using Harvey-Hoeven algorithm (also the asymptomatic bound has yet to be proved).
We use this value as it is the least favorable setup for the proposed method.
As a consequence the number $O_{\text{original}}$ bit operations required for the original layer, $O_{R^{(1)}}$ the number of bit operations for the naively quantized layer and $O_{R^{(k)}}$ for the $\text{i}^{\text{th}}$ order residual quantization expansion are
\begin{equation}
    \begin{cases}
    O_{\text{original}} = D^2\frac{d^2n_i n_o}{s^2}  32\log(32) \\
    O_{R^{(1)}} = D^2 \left[(n_i  + \frac{n_o}{s^2}) 32\log(32) + \frac{d^2n_i n_o}{s^2} b \log(b)\right] \\
    O_{R^{(k-1)}} = D^2 \left[(n_i  + \frac{n_o}{s^2}) 32\log(32) + k \frac{d^2n_i n_o}{s^2} b \log(b)\right]
    \end{cases}
\end{equation}
Using this result we can estimate the maximum order of expansion before which the number of operations in $f^{(k)}$ exceeds the $O_{\text{baseline}}$.
Note that in the case of fully-connected layers, $D = 1$, $s = 1$ and $d = 1$.
In the following section, we use the induced metric of accuracy with respect to the total number of bit-wise operations performed by the DNN on a single input.
This metric doesn't consider the fact that the added operations can be performed in parallel. For SQuant \citep{squant2022}, we use our own implementation which achieve different accuracy results due to different initial accuracies for baseline models. As for ZeroQ \citep{cai2020zeroq}, we use results provided by SQuant \citep{squant2022}. Similarly to prior work \citep{meller2019same,nagel2019data,squant2022}, we denote W$\cdot$/A$\cdot$ the quantization setup (number of bits for weight quantization and number of bit for activation quantization).
We used Tensorflow implementations of the baseline models from the official repository when possible or other publicly available resources when necessary.
MobileNets and ResNets for ImageNet come from tensorflow models zoo.
In object detection, we tested he SSD model with a MobileNet backbone from Manish's git repository.
Finally, in image semantic segmentation, the DeepLab V3+ model came from Bonlime's git repository.
The networks pre-trained weights provide standard baseline accuracies on each tasks.
The computations of the residues as well as the work performed on the weights were done using the Numpy python's library.

\section{Expansion Reduction in the Accumulator}\label{sec:appendix_sum}
Let's go though the detailed procedure we applied in order to go from the simulated quantization with floating point scaling factors to integer only inference. We rely on the procedure introduced in [1]. First, let's consider the quantization of a single tensor $A$. Quantization is simulated using 
$A \approx s_A \times \left\lfloor A/s(A) \right\rceil = s_A \times A^Q$
In the present situation, $A^Q$ is quantized and actually fits on the target bit-width while $s_A$ is stored as a floating point value.
In order to achieve integer-only inference, we need to convert the multiplication by $s_A$ to an integer multiplication. From [1], we rely on equation (6) and, using similar notations, we get $s_A \times A^Q \approx M_A \times 2^{-n} \times A^Q$.
All these operations are integer-only operations. However, in practice, these operations may add errors on top of the quantization scheme itself. To measure this error, we conducted our own experiment and observed that the extra error does not change the quantized output of the layer; this is due to the fact that the term $M_A$ has at least 30 bits of precision while $A^Q$ has 1, 4 or 8 bits of precision (depending on the quantization bit-width). This difference in precision comes from the use of a larger accumulator which is standard in quantized inference.

Now that we detailed how to quantize, we detail how to add two distinct tensors $A$ and $B$ (which will be of special importance to add the residues, as you pointed out). In the simulated quantization, we would get
$A + B \approx s_A \times \left\lfloor A/s(A) \right\rceil + s_B \times \left\lfloor B/s(B) \right\rceil = s_A \times A^Q + s_B \times B^Q$. Now, by applying the same technique as above, we get
$s_A \times A^Q + s_B \times B^Q \approx M_A \times 2^{-n} \times A^Q + M_{B/A} \times 2^{-n} \times B^Q$, where $M_{B/A}$ is the integer closest to $M_{B}/M_{A}$ encoded with 30 bits of precision.
This was discussed in Appendix A.2 in [1]. The authors state that this operation is costly as it requires to perform the integer multiplication prior to the addition. This is a result of the fact that we need to go from the accumulator down to the quantized bit-width and then back up to the accumulator size.

In our pipeline, we limit this cost by using a fused operation in order to introduce low overhead as compared to simply using a larger kernel size. Formally, we used the above mentioned formula directly on the multiplication result. In other words, we get the following formula for $A$ and its quantized residue $R_A$ in the case of quantization using $b$ bits:
$s_A \times A^Q + s_{R_A} \times R_A \approx 2^{-n} \times (M_A \times A^Q + M_{R_A} \times 2^{-b} \times R_A)$.
Consequently, the overhead from residual summation is limited to a bit-shift on the residue during reduction of the accumulator. In Table \ref{tab:annex1}, we report the relative overhead introduced by this extra bit-shift in the residual summation scheme with respect to the total inference cost.
\begin{table}[!t]
\caption{Overhead induced by the sum reduction in full integer implementation of REx. In this table $R$ means one full residue without sparsity.}
\label{tab:annex1}
\begin{tabular}{|l|l|l|l|l|l|}
\hline
model & W$4_{+25\%}$/A4 & W$4_{+50\%}$/A4 & W$4_{+100\%}$/A4 & W$4_{+1 R}$/A4 & W$4_{+2 R}$/A4 \\ \hline
ResNet & 0.035\% & 0.070\% & 0.138\% & 0.415\% & 0.830\% \\ \hline
MobileNet v2 & 0.016\% & 0.032\% & 0.063\% & 0.189\% & 0.378\% \\ \hline
BERT & 0.004\% & 0.009\% & 0.018\% & 0.054\% & 0.107\% \\ \hline
\end{tabular}
\end{table}
For instance, we list in Table \ref{tab:annex2} some results with Gap9 hardware: we compare the overhead of computing several convolutional layers with the cost of the reduction of the residuals.
\begin{table}[!t]
\caption{Overhead induced by the sum reduction in full integer implementation of REx. In this table we study a 1 by 1 convolution on inputs of shape $224\times224\times 320$ and output of $1280$ channels as well as a depthwise convolutions on inputs of shape $224\times 224 \times 96$.}
\label{tab:annex2}
\begin{tabular}{{|l|l|l|l|l|}}
\hline
Filter & expansion order & full runtime (ops) & overhead (ops) & relative cost \\ \hline
Conv 1x1 & 2 & $2.0\times10^7$ & $6.2\times10^4$ & 0.15 \\\hline
Conv 1x1 & 3 & $4.0\times10^7$ & $6.1\times10^4$ & 0.10 \\\hline
Conv 1x1 & 4 & $6.0\times10^7$ & $6.2\times10^4$ & 0.08 \\\hline
Depthwise Conv 3x3 & 2 & $2.7\times10^5$ & $3.0\times10^4$ & 5.49 \\\hline
Depthwise Conv 3x3 & 3 & $5.4\times10^5$ & $3.0\times10^4$ & 3.38 \\\hline
Depthwise Conv 3x3 & 4 & $8.7\times10^5$ & $2.9\times10^4$ & 2.43\\\hline
\end{tabular}
\end{table}
On convolutions, as expected the cost are completely negligible. Due to the parallelization abilities of the hardware, as the expansion order increases, the overhead decreases. Furthermore, it should be noted that depthwise convolutional layers are not well supported by most hardware to this day, hence the less impressive results but similar absolute overhead cost.
\end{document}